\newcommand{\RR}{\mathbb{R}}
\newcommand{\X}{\mathcal{X}}
\renewcommand{\H}{\mathcal{H}}
\newcommand{\A}{\mathcal{A}}
\newcommand{\R}{\mathcal{R}}
\newcommand{\V}{\mathcal{V}}
\newcommand{\squishlist}{
 \begin{list}{$\bullet$}
  { \setlength{\itemsep}{0pt}
     \setlength{\parsep}{3pt}
     \setlength{\topsep}{3pt}
     \setlength{\partopsep}{0pt}
     \setlength{\leftmargin}{1.5em}
     \setlength{\labelwidth}{1em}
     \setlength{\labelsep}{0.5em} } }
\newcommand{\squishend}{
  \end{list}  }
\theoremstyle{plain}
\newtheorem{theorem}{Theorem}[section]
\newtheorem{corollary}[theorem]{Corollary}
\theoremstyle{definition}
\newtheorem{definition}[theorem]{Definition}
\theoremstyle{remark}
\newtheorem{remark}[theorem]{Remark}
\icmltitlerunning{Fundamental Tradeoffs in Learning with Prior Information}
\begin{document}

\twocolumn[
\icmltitle{Fundamental Tradeoffs in Learning with Prior Information}




\begin{icmlauthorlist}
\icmlauthor{Anirudha Majumdar}{yyy}
\end{icmlauthorlist}

\icmlaffiliation{yyy}{Department of Mechanical and Aerospace Engineering, Princeton University, Princeton, NJ, USA}

\icmlcorrespondingauthor{Anirudha Majumdar}{ani.majumdar@princeton.edu}

\icmlkeywords{Minimax risk, Bayes risk, prioritized risk, fundamental tradeoffs}

\vskip 0.3in
]



\printAffiliationsAndNotice{} 

\begin{abstract}
We seek to understand fundamental tradeoffs between the accuracy of prior information that a learner has on a given problem and its learning performance. We introduce the notion of \emph{prioritized risk}, which differs from traditional notions of minimax and Bayes risk by allowing us to study such fundamental tradeoffs in settings where reality does not necessarily conform to the learner's prior. We present a general reduction-based approach for extending classical minimax lower-bound techniques in order to lower bound the prioritized risk for statistical estimation problems. We also introduce a novel generalization of Fano's inequality (which may be of independent interest) for lower bounding the prioritized risk in more general settings involving unbounded losses. We illustrate the ability of our framework to provide insights into tradeoffs between prior information and learning performance for problems in estimation, regression, and reinforcement learning. 
\end{abstract}

\section{Introduction}
\label{sec:intro}

We are motivated by the problem of understanding fundamental limits and tradeoffs in learning with prior information: how much prior knowledge does one \emph{require} in order to learn quickly on a given task? In other words, for a given problem, is there a fundamental limit on the performance of a learner in the absence of sufficient prior information? 
Fundamental limits in learning are typically formalized via the classical notions of \emph{minimax risk} and \emph{Bayes risk} \cite{Berger13}. Consider the standard setting in statistical learning theory where a learner receives a dataset $x_1^n \coloneqq \{x_i\}_{i=1}^n$ containing $n$ i.i.d. observations of a random variable $X$. Suppose that the distribution $P_\theta$ of $X$ is defined by a parameter $\theta \in \Theta$ which is \emph{a priori} unknown to the learner. The learner's task is to use the dataset to minimize \emph{risk} (i.e., expected loss) for this unknown distribution. The minimax risk then corresponds to the lowest \emph{worst-case} risk (over the family of distributions defined by $\theta \in \Theta$) achievable by any learner (see Sec.~\ref{sec:background} for a formal definition). In other words, the minimax risk corresponds to the minimax-optimal value of the following game: the learner first chooses a particular learning algorithm, and then an adversary chooses a distribution parameter $\theta \in \Theta$; the learner's cost (negative payoff) is equal to its risk. 

The minimax risk does not allow one to take into account prior information that a learner may have (beyond the weak prior knowledge that $\theta$ belongs to the set $\Theta$). As an alternative, one may assume that the learner is equipped with prior knowledge of the distribution $\pi$ that the parameter $\theta$ is drawn from. The \emph{Bayes risk} then corresponds to the smallest possible \emph{average} risk assuming that $\theta$ is drawn from the prior distribution $\pi$ (see Sec.~\ref{sec:background} for a formal definition). 
However, the Bayes risk formulation assumes that the learner is equipped with \emph{nature's prior}, i.e., the true distribution from which $\theta$ is drawn. The Bayes risk formulation thus does not capture settings where reality does not conform to the learner's prior. 

{\bf Statement of contributions.} We seek to address the challenges with using the minimax and Bayes risk for understanding fundamental limits of learning with prior information. To this end, we make the following contributions:
\squishlist
\item We propose a quantity --- which we refer to as \emph{prioritized risk} --- for analyzing settings where reality does not fully conform to the learner's prior. The key idea is to consider a version of the minimax risk where the risk associated with a given distribution $P_\theta$ is \emph{weighted by the prior} that the learner has on $\theta$. We show that this quantity allows us to understand fundamental tradeoffs between the accuracy of prior information and learning performance (risk) on a given problem: a lower bound on the prioritized risk allows one to establish a fundamental limit on learning performance in the absence of sufficient prior information (Sec.~\ref{sec:prioritized risk}).
\item We provide a general reduction-based strategy for extending classical techniques for lower bounding minimax and Bayes risk --- including the methods of LeCam, Assouad, and Fano \cite{Tsybakov08, Yang99, Yu97} --- to obtain lower bounds on the prioritized risk for estimation problems (Sec.~\ref{sec:lower bounds estimation}).
\item We derive a novel \emph{generalized Fano inequality} for obtaining lower bounds on the prioritized risk for general learning problems beyond estimation (Sec.~\ref{sec:generalized fano}). This inequality handles problems with unbounded losses (in contrast to prior generalized Fano inequalities \cite{Chen16, Gerchinovitz20, Majumdar22}), and may thus be of independent interest. 
\item We illustrate the ability of the prioritized risk framework to provide insights into tradeoffs between prior information and learning performance for various problems including prior-informed versions of the following (Sec.~\ref{sec:examples}): (i) Bernoulli mean estimation, (ii) logistic regression, and (iii) reinforcement learning (RL) with environments drawn from Zipfian distributions. 
\squishend

\section{Background: minimax and Bayes risk}
\label{sec:background}

We provide a brief introduction to the minimax and Bayes risks, and refer the reader to \cite{Berger13, Tsybakov08} for a more thorough exposition. 
Consider a random variable $X$ that takes values in a sample space $\X$. Suppose that the distribution $P_\theta$ of $X$ is defined by a parameter $\theta \in \Theta$ which is unknown to the learner. A learner $\sigma: \X^n \rightarrow \A$ receives a dataset $x_1^n \coloneqq \{x_i\}_{i=1}^n$ of $n$ i.i.d. realizations of $X$, and must output an \emph{action} $a \in \A$ that is evaluated according to a loss function $L: \Theta \times \A \rightarrow [0,\infty)$. This setting captures many problems of interest; in estimation problems, one can take $\A = \Theta$ and evaluate the learner using a loss $L(\theta, \sigma(x_1^n)) \coloneqq (\theta - \sigma(x_1^n))^2$. More broadly, $\A$ can correspond to a hypothesis space $\H$; the learner may then be evaluated by its future expected performance $L(\theta, \sigma(x_1^n)) \coloneqq \mathbb{E}_{X \sim P_\theta} [l(X, \sigma(x_1^n))]$ on data from $P_\theta$. 

{\bf Minimax risk.} The \emph{risk} of a learner is defined as
\begin{equation}
    \R(\sigma, \theta) \coloneqq \underset{X_1^n \sim P_\theta^n}{\mathbb{E}} \big{[} L(\theta, \sigma(X_1^n)) \big{]},
\end{equation}
where the expectation is taken with respect to the dataset used by the learner. We can then define the \emph{minimax risk}: 
\begin{align}
\label{eq:minimax risk}
\R_\text{minimax}(L; \Theta) &\coloneqq \inf_{\sigma} \ \sup_{\theta \in \Theta} \ \R(\sigma, \theta). 
\end{align}
This can be interpreted as a game where the learner fixes $\sigma$, and an adversary with knowledge of $\sigma$ then chooses $\theta \in \Theta$ in order to maximize the risk. The minimax risk thus corresponds to the lowest \emph{worst-case} risk (over the family of distributions defined by $\theta \in \Theta$) of any learner. 

{\bf Bayes risk.} Since the minimax risk corresponds to the worst-case risk, it does not capture prior knowledge that a learner may have beyond the fact that $\theta \in \Theta$. One can instead consider the Bayes risk, which is the smallest possible average risk assuming that $\theta$ is drawn from a distribution $\pi$:
\begin{align}
\label{eq:bayes risk}
\R_\text{Bayes}(\pi, L; \Theta) &\coloneqq \inf_{\sigma} \ \int_{\Theta} \R(\sigma, \theta)  \ \pi(d\theta) \ . 
\end{align}
A learner that implements Bayesian inference with the prior $\pi$ and observation $x_1^n$ achieves the optimal Bayes risk \citep[Ch. 4]{Berger13}. However, the Bayes risk formulation makes a strong assumption on how $\theta$ is chosen and the knowledge that the learner has, i.e., that the parameter $\theta$ is drawn from a ``true" distribution $\pi$ (``nature's prior") and that this distribution is \emph{known} to the learner. The Bayes risk is thus not directly useful in analyzing settings where reality does not conform to the learner's prior. 
\section{Prioritized risk}
\label{sec:prioritized risk}

Motivated by the challenges associated with the notions of minimax and Bayes risk for analyzing learning algorithms with prior information, we propose a different quantity that aims to capture the relationship between prior knowledge and learning performance on a given task. We refer to this quantity as \emph{prioritized risk} and discuss its interpretation below. Similar to the minimax risk, the prioritized risk operates in a setting where nature chooses a \emph{particular} value $\theta \in \Theta$ (instead of randomly choosing $\theta$ from a distribution). However, similar to the Bayes risk, we allow learners to be equipped with prior information (which may not fully capture the true value of $\theta$).  

\subsection{Definition}

Let $\pi: \Theta \rightarrow \RR_{>0}$ denote a function that captures prior information that a learner has about the learning problem (i.e., about the value of $\theta \in \Theta$ that the learner will encounter). If $\pi$ is normalized such that $\int_\Theta \pi(\theta) d\theta = 1$, it may be interpreted as a density corresponding to a Bayesian prior. However, here we eschew this Bayesian interpretation and simply think of $\pi$ as a way for the learner to encode all available prior knowledge or inductive bias it has on the problem (specified before the learner observes any data). In this sense, $\pi$ is similar to the ``luckiness function" \cite{Shawe-Taylor98} or the prior in PAC-Bayes approaches \cite{McAllester99}. In cases where $\pi$ does not integrate to 1, it may be interpreted as an energy-based model \cite{Lecun06} which encodes prior information. Our framework allows us to handle both normalized and unnormalized priors.

\vspace{5pt}

\begin{definition}[Prioritized risk] For a given family of distributions $\{P_\theta\}_{\theta \in \Theta}$, loss function $L$ , and prior function $\pi: \Theta \rightarrow \RR_{>0}$, the \emph{prioritized risk} is defined as:
\begin{align}
\label{eq:prioritized risk}
    \R_\text{prior}(\pi, L; \Theta) &\coloneqq \inf_{\sigma} \ \sup_{\theta \in \Theta} \ \pi(\theta) \R(\sigma, \theta) \\
&= \inf_{\sigma} \ \sup_{\theta \in \Theta} \ \pi(\theta) \underset{X_1^n \sim P_\theta^n}{\mathbb{E}} \big{[} L(\theta, \sigma(X_1^n)) \big{]}. \nonumber
\end{align}
We will refer to the quantity:
\begin{align}
\label{eq:learner-specific prioritized risk}
\R^\sigma_\text{prior}(\pi, L; \Theta) \coloneqq  \ \sup_{\theta \in \Theta} \ \pi(\theta) \R(\sigma, \theta)
\end{align}
as the \emph{learner-specific prioritized risk}. 
\end{definition}

As we discuss below, the prioritized risk can provide insights along two dimensions: (i) analyzing different learning \emph{algorithms}, and (ii) analyzing different learning \emph{problems}. 

\subsection{Implications for learning algorithms}
\label{sec:implications for learning algorithms}

In order to interpret the prioritized risk, consider a learner $\sigma$ that achieves a small learner-specific prioritized risk:
\begin{equation}
\label{eq:learner-specific prioritized risk upper bound}
    \pi(\theta) \R(\sigma, \theta) \leq \epsilon, \ \forall \theta \in \Theta.
\end{equation}
We will say that a parameter $\theta \in \Theta$ chosen by nature \emph{conforms} to the learner's prior $\pi$ if $\pi(\theta)$ is high; conversely, we will say that $\theta$ does not conform to the learner's prior if $\pi(\theta)$ is low.
Then, we have the following implication for a learner that satisfies \eqref{eq:learner-specific prioritized risk upper bound}: the more closely nature conforms to the learner's prior (i.e., the higher $\pi(\theta)$ is for the chosen $\theta$), the lower the risk is guaranteed to be: $\R(\sigma, \theta) \leq \epsilon / \pi(\theta)$. 

The prioritized risk also allows us to compare different learning algorithms. For a given learning problem and prior $\pi$, consider two learners $\sigma$ and $\sigma'$ that have learner-specific prioritized risks $\epsilon$ and $\epsilon'$ respectively, with $\epsilon' < \epsilon$. Then,
\begin{align}
    &\R(\sigma, \theta) \leq \epsilon / \pi(\theta), \ \forall \theta \in \Theta, \\
    &\R(\sigma', \theta) \leq \epsilon' / \pi(\theta) < \epsilon / \pi(\theta), \ \forall \theta \in \Theta. 
\end{align}
In such a case, one should prefer the learner $\sigma'$ since it affords a better tradeoff between learning performance (i.e., risk) and the accuracy of prior information (i.e., how much $\theta$ conforms to the prior). 

\subsection{Implications for learning problems}
\label{sec:implications for learning problems}

In this work, we will focus on \emph{lower bounds} for the prioritized risk. Consider a problem where one has established:
\begin{equation}
\label{eq:prioritized risk lower bound}
\R_\text{prior}(\pi, L; \Theta) \geq \beta.
\end{equation}
This implies\footnote{Here we assume for simplicity that the supremum and infimum are achieved (i.e., sup = max, inf = min).} that no matter what learning algorithm one chooses (i.e., for any choice of $\sigma$),
\begin{equation}
\label{eq:prioritized risk lower bound 2}
\exists \theta \in \Theta \ \text{such that} \ \underset{\begin{subarray}{c} \vspace{1.5pt} \text{Prior} \\ \text{(``Nature")}\end{subarray}}{\underbrace{\pi(\theta)}} \  \underset{\begin{subarray}{c}\text{Learning} \\ \text{(``Nurture")}\end{subarray}}{\underbrace{\R(\sigma, \theta)}} \geq \beta. 
\end{equation}
This relationship between prior information (``nature") and learning (``nurture") takes the form of an \emph{uncertainty principle}\footnote{Recall the form of uncertainty principles in quantum mechanics, e.g., $\Delta x \Delta p \geq h/4\pi$, which states that there is a fundamental tradeoff between knowing a particle's position and its momentum.} and captures a fundamental tradeoff: it is \emph{impossible} for both the risk and the prior to be low for all $\theta \in \Theta$. In other words, for any learning algorithm $\sigma$, there exists $\theta \in \Theta$ such that if the learner achieves low risk, it \emph{must} be the case that reality conforms to the learner's prior (i.e., $\pi(\theta)$ is large). 

\emph{Example \ref{sec:implications for learning problems}.} As an example of the kind of implications one may derive from a lower bound \eqref{eq:prioritized risk lower bound} on the prioritized risk, consider a learning problem with an associated prior $\pi$ (with $\pi(\theta) \in [0,1], \forall \theta \in \Theta$). Consider a learner $\sigma$ that achieves low risk for values of $\theta$ that have a high prior:
\begin{equation}
    \R(\sigma, \theta) \leq \epsilon < \beta , \ \ \forall \theta \ \text{s.t.} \ \frac{1}{2} \leq \pi(\theta) \leq 1. 
\end{equation}
Then, from \eqref{eq:prioritized risk lower bound 2}, we see that there must exist a $\theta$ with low prior where the learner performs poorly:
\begin{equation}
    \exists \theta \ \text{s.t.} \ \pi(\theta) < \frac{1}{2}, \ \text{where} \ \R(\sigma, \theta) \geq \frac{\beta}{\pi(\theta)} > 2\beta. 
\end{equation}
A lower bound on the prioritized risk thus establishes a fundamental tradeoff for a given learning problem: any learner that performs extremely well for values of $\theta$ with high $\pi(\theta)$ \emph{must} give up performance for a low value of $\pi(\theta)$; the learner may thus perform poorly if reality does not conform to its prior (i.e., if $\pi(\theta)$ is low). 

\vspace{2pt}
\begin{remark} [Relationship to minimax and Bayes risk]
\label{remark:relationship to minimax and bayes risk}
We make the following straightforward observations relating the prioritized risk to the minimax and Bayes risks:
\squishlist 
\item The prioritized risk reduces to the minimax risk if $\pi(\theta) \equiv 1$. If $\pi(\theta) \leq 1 \ \forall \theta$, then $\R_\text{prior} \leq \R_\text{minimax}$. 

\item In settings where $\Theta$ is a countable set, the prioritized risk lower bounds the Bayes risk (both computed using a given prior $\pi$ that is normalized to be a valid probability distribution). This is because for any learner $\sigma$: $\sup_{\theta \in \Theta} \ \pi(\theta) \R(\sigma, \theta) \leq \sum_{\theta \in \Theta} \pi(\theta) \R(\sigma, \theta)$. 
Since the Bayes risk lower bounds the minimax risk, we have for countable $\Theta$ that: $ \R_\text{prior} \leq \R_\text{Bayes}\leq \R_\text{minimax}$. 
\item In the more general setting of uncountable $\Theta$, the prioritized risk may be larger than the Bayes risk (e.g., take $\pi$ to be the density function of a univariate Gaussian such that $\pi(\theta) > 1$ for some $\theta$, and let $\R(\sigma, \theta) \equiv 1$). 
\squishend 
\end{remark}

\section{Lower bounds on prioritized risk: estimation problems}
\label{sec:lower bounds estimation}

We now describe techniques for obtaining lower bounds on the prioritized risk. In this section, we focus on estimation problems; here, a learner $\sigma: \X^n \rightarrow \Theta$ receives a dataset $x_1^n \coloneqq 
 \{x_i\}_{i=1}^n$ of $n$ i.i.d. realizations from a distribution $P_\theta$ and outputs an estimate $\hat{\theta} \coloneqq \sigma(x_1^n)$ of the underlying parameter $\theta$. The learner is evaluated using a loss:
 \begin{equation}
 \label{eq:estimation loss}
     L(\theta, \sigma(x_1^n)) \coloneqq \rho \big{(}\theta, \sigma(x_1^n) \big{)} ,
 \end{equation}
 where $\rho: \Theta \times \Theta \rightarrow \RR_+$ is a (pseudo)metric, e.g.,  $\|\theta - \hat{\theta}\|_2$.  

\subsection{Reduction from estimation to testing}

In this section, we describe a general strategy for extending classical techniques for lower bounding the minimax risk (e.g., the methods of LeCam, Assouad, and Fano) in order to lower bound the prioritized risk for estimation problems. The standard starting point for proving lower bounds on the minimax risk is to reduce the problem of estimation to one of hypothesis testing (see, e.g., \cite{Tsybakov08} \citep[Ch. 7]{Duchi16}). One can then use information-theoretic techniques to lower bound the Bayes risk for the testing problem, which yields a lower bound for the minimax risk. Here, we extend this classical reduction from estimation to testing in order to lower bound the prioritized risk. 

Consider a family of distributions $\{\theta_v\}_{v \in \V}$, where $\V$ is a finite index set. We will refer to this set as a $(\delta, \pi)$-packing if balls (as defined by the pseudometric $\rho$) of radius $\frac{\delta}{\pi(\theta_v)}$ centered around each $\theta_v$ are non-overlapping. Now consider the following hypothesis testing problem. First, we define a random variable $V$ corresponding to a uniform distribution over $\V$. Conditioned on a choice $V=v$, a dataset $x_1^n \coloneqq \{x_i \}_{i=1}^n$ is drawn from $P_{\theta_v}^n$. Given this dataset, the hypothesis testing problem is to determine the underlying index $v \in \V$. A mapping $\Psi: \X^n \rightarrow \V$ is referred to as a \emph{test function}. Its associated probability of error is:
\begin{equation}
\mathbb{P} (\Psi (X_1^n) \neq V) \coloneqq \frac{1}{|\V|} \sum_{v \in \V} \mathbb{P} \Big{(}\Psi (X_1^n) \neq v | V = v \Big{)}. \nonumber
\end{equation}

We now establish the reduction from estimation (where a learner is evaluated according to the learner-specific prioritized risk) to hypothesis testing via the following argument:

1. Suppose there exists a learner $\sigma: \X^n \rightarrow \Theta$ with a low learner-specific prioritized risk $\R^\sigma_\text{prior}(\pi, L; \Theta)$ (Eq. \eqref{eq:learner-specific prioritized risk})
for the original estimation problem. 

2. Then, the following \emph{prior-weighted} test function (which uses $\sigma$ as a subroutine) achieves a small error probability for the hypothesis testing problem:
\begin{equation}
\label{eq:prior-weighted test function}
\Psi(x_1^n) \coloneqq \underset{v \in \V}{\text{argmin}} \ \pi(\theta_v) \rho(\theta_v, \sigma(x_1^n)).  
\end{equation}
This reduction (stated formally below) allows us to turn a learner that achieves low prioritized risk (for the estimation problem) to a test function that achieves a small probability of error (for the hypothesis testing problem). The contrapositive then allows us to translate lower bounds on the testing problem (which can be obtained using standard information-theoretic techniques) to lower bounds on the estimation problem. Specifically, suppose we have a lower bound on the achievable probability of error for the testing problem. This lower bounds the probability of error for the prior-weighted test function \eqref{eq:prior-weighted test function} (since this is just a particular test function). The reduction above then provides a lower bound on the prioritized risk for the estimation problem. 

\begin{restatable}[Reduction from estimation to testing]{proposition}{estimationreduction}
\label{prop:reduction}
Let $\sigma: \X^n \rightarrow \Theta$ be an estimator for a learning problem defined by a loss function of the form \eqref{eq:estimation loss}. Let $\{ \theta_v \}_{v \in \V}$ form a $(\delta,\pi)$-packing and define the prior-weighted test function $\Psi$ as in \eqref{eq:prior-weighted test function}. We then have the following bound:
\begin{equation}
\R^\sigma_\text{prior}(\pi, L; \Theta) \geq \delta \cdot \mathbb{P} (\Psi (X_1^n) \neq V).
\end{equation}
Hence, taking an infimum over estimators, we have:
\begin{equation}
\label{eq:reduction inequality}
\R_\text{prior}(\pi, L; \Theta) \geq \delta \ \inf_{\Psi} \ \mathbb{P} (\Psi (X_1^n) \neq V).
\end{equation}
\end{restatable}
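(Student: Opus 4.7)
The plan is to establish the pointwise (per-estimator) inequality $\R^\sigma_\text{prior}(\pi, L; \Theta) \geq \delta \cdot \mathbb{P}(\Psi(X_1^n) \neq V)$, and then take an infimum over $\sigma$ to obtain the second inequality via the simple observation that the prior-weighted test $\Psi$ built from $\sigma$ is one admissible test function, so its error probability is at least $\inf_\Psi \mathbb{P}(\Psi(X_1^n) \neq V)$. The entire argument is a generalization of the classical reduction from estimation to testing, where the usual packing radius $\delta$ is replaced by the prior-dependent radius $\delta/\pi(\theta_v)$ and the usual test is replaced by the prior-weighted variant in \eqref{eq:prior-weighted test function}.

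The core step is a deterministic implication: on the event $\{\Psi(x_1^n) \neq V\}$, one has $\pi(\theta_V)\,\rho(\theta_V, \sigma(x_1^n)) \geq \delta$. I would argue this by contrapositive. Suppose instead that $\pi(\theta_V)\,\rho(\theta_V, \sigma(x_1^n)) < \delta$, i.e., $\sigma(x_1^n)$ lies in the open ball of radius $\delta/\pi(\theta_V)$ around $\theta_V$. By the $(\delta,\pi)$-packing assumption, these balls are pairwise disjoint, so for every $v' \neq V$ the estimate $\sigma(x_1^n)$ lies outside the ball of radius $\delta/\pi(\theta_{v'})$ around $\theta_{v'}$, giving $\pi(\theta_{v'})\,\rho(\theta_{v'}, \sigma(x_1^n)) \geq \delta > \pi(\theta_V)\,\rho(\theta_V, \sigma(x_1^n))$. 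Hence $V$ is the unique minimizer in \eqref{eq:prior-weighted test function}, so $\Psi(x_1^n) = V$, a contradiction.

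Given this deterministic lower bound, a one-line Markov-type argument finishes the job. For each $v \in \V$, using nonnegativity of the loss and the step above,
\begin{align*}
\pi(\theta_v)\,\R(\sigma, \theta_v)
&= \mathbb{E}_{X_1^n \sim P_{\theta_v}^n}\!\left[\pi(\theta_v)\,\rho(\theta_v, \sigma(X_1^n))\right] \\
&\geq \delta \cdot \mathbb{P}\!\left(\Psi(X_1^n) \neq v \,\middle|\, V = v\right).
\end{align*}
Since $\sup_{\theta \in \Theta} \pi(\theta) \R(\sigma, \theta) \geq \max_{v \in \V} \pi(\theta_v) \R(\sigma, \theta_v) \geq \frac{1}{|\V|} \sum_{v} \pi(\theta_v) \R(\sigma, \theta_v)$, averaging the displayed bound over the uniform prior on $\V$ and using the definition of the marginal error probability yields
\[
\R^\sigma_\text{prior}(\pi, L; \Theta) \;\geq\; \delta \cdot \mathbb{P}(\Psi(X_1^n) \neq V),
\]
and taking $\inf_\sigma$ on the left (which is at least $\inf_\sigma$ over the corresponding prior-weighted tests, which in turn is at least $\inf_\Psi$ over all tests) gives \eqref{eq:reduction inequality}.

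The only subtle step is the packing-to-testing implication in the second paragraph; the rest is bookkeeping. The novelty relative to the classical case is that the packing radius and the test function both get reweighted by $\pi$, and it is exactly this matching between the radius $\delta/\pi(\theta_v)$ used in the packing and the weights $\pi(\theta_v)$ used in the argmin that makes the disjointness argument go through and produces a lower bound on $\pi(\theta_V)\,\rho(\theta_V, \sigma(X_1^n))$ rather than on $\rho(\theta_V, \sigma(X_1^n))$ alone.
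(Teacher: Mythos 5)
Your proof is correct and follows essentially the same approach as the paper's: a Markov-type lower bound on the prior-weighted risk, the key deterministic implication that $\Psi(x_1^n)\neq v$ forces $\pi(\theta_v)\rho(\theta_v,\sigma(x_1^n))\geq\delta$ (established via the disjointness of the non-uniform packing balls), and an average-over-$\V$ step to produce the testing error probability. The only cosmetic difference is the ordering — you prove the deterministic packing-to-testing implication first and then apply Markov per index $v$, while the paper applies the Markov step first and then restricts to the packing set — but the ideas and the resulting bound are identical.
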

\begin{proof}
    The proof is presented in Appendix~\ref{app:proofs}. The primary distinctions of this reduction as compared to the standard reduction from estimation to testing are: (i) the use of the \emph{non-uniform} $(\delta,\pi)$-packing, and (ii) the use of the \emph{prior-weighted} test function \eqref{eq:prior-weighted test function}. If $\pi(\theta) \equiv 1$, the reduction presented here reduces to the standard reduction.  
\end{proof}
As we demonstrate below, this reduction allows us to extend classical techniques for lower bounding the minimax risk \cite{Tsybakov08, Yang99, Yu97} in order to lower bound the prioritized risk. Proofs of the results below are deferred to Appendix~\ref{app:proofs}. 

\subsection{LeCam's method for prioritized risk}

We first consider an extension of LeCam's method in order to lower bound the prioritized risk. This technique operates by constructing a packing $\{\theta_v\}_{v \in \V}$, where $\V = \{0,1\}$, and utilizing lower bounds for binary hypothesis testing. 

\begin{restatable}[LeCam's method for prioritized risk]{theorem}{lecam}
\label{thm:lecam}
Consider an estimation problem over a family of distributions $\{P_\theta\}_{\theta \in \Theta}$ with a loss function of the form \eqref{eq:estimation loss}. A learner has prior $\pi$ and will receive $n$ i.i.d. observations from a chosen distribution. Let $\{\theta_0, \theta_1 \}$ form a $(\delta, \pi)$-packing. We then have the following lower bound on the prioritized risk:
\begin{equation}
    \R_\text{prior}(\pi, L; \Theta) \geq \frac{\delta}{2} \Big{[} 1 - \|P_{\theta_0}^n - P_{\theta_1}^n \|_\text{TV} \Big{]}.
\end{equation}
\end{restatable}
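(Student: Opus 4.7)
The plan is to invoke Proposition~\ref{prop:reduction} with the two-point packing $\V = \{0,1\}$ indexing $\{\theta_0, \theta_1\}$ and then reduce the resulting uniform-prior binary testing problem to the standard two-point total-variation bound. Because the reduction already absorbs the non-uniform prior weighting through the prior-weighted test function \eqref{eq:prior-weighted test function} and the $(\delta,\pi)$-packing, the weighting does not need to reappear at this stage: once one is past the reduction, what remains is a purely information-theoretic binary test under a uniform prior on $V$.

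Concretely, I would first apply Proposition~\ref{prop:reduction} verbatim to obtain
\[
\R_\text{prior}(\pi, L; \Theta) \;\geq\; \delta \,\inf_{\Psi}\, \mathbb{P}\bigl(\Psi(X_1^n) \neq V\bigr),
\]
where $V$ is uniform on $\{0,1\}$ and, conditioned on $V=v$, the data $X_1^n$ is drawn from $P_{\theta_v}^n$. It then suffices to lower bound the Bayes error of this two-hypothesis test.

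Second, I would invoke the classical LeCam two-point lemma (see e.g.\ \cite{Tsybakov08}):
\[
\inf_{\Psi}\, \mathbb{P}\bigl(\Psi(X_1^n) \neq V\bigr) \;=\; \tfrac{1}{2}\bigl(1 - \|P_{\theta_0}^n - P_{\theta_1}^n\|_\text{TV}\bigr),
\]
whose standard derivation is a one-line argument via the likelihood-ratio test and the variational characterization of total variation. Substituting this identity into the previous display yields the claimed bound immediately.

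The only conceptual novelty of the theorem — accommodating the non-uniform prior weighting — is entirely concentrated in Proposition~\ref{prop:reduction}; given that result, I do not expect any real obstacle here. The one point meriting care in the write-up is to confirm that the classical two-point lemma is oblivious to the fact that our packing is uniform in the prior-weighted pseudo-metric $\rho/\pi$ rather than in $\rho$ itself. This is indeed the case, because the packing radius enters the conclusion only as the scalar $\delta$ already pulled out in the reduction step.
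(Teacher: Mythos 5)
Your proposal is correct and matches the paper's proof exactly: both apply Proposition~\ref{prop:reduction} with the two-point index set $\V=\{0,1\}$ and then substitute the classical variational characterization $\inf_\Psi \{P_{\theta_0}^n(\Psi\neq 0)+P_{\theta_1}^n(\Psi\neq 1)\} = 1-\|P_{\theta_0}^n-P_{\theta_1}^n\|_\text{TV}$, the factor $\tfrac{1}{2}$ arising from the uniform distribution on $\V$. Your closing remark — that the non-uniform prior is fully absorbed by the $(\delta,\pi)$-packing and prior-weighted test function in the reduction, leaving a vanilla binary test — is precisely the right observation.
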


In order to understand the dependence of $\R_\text{prior}$ on the number of samples $n$, we can find a $(\delta(n), \pi)$-packing as a function of $n$ such that the total variation distance is bounded:
\begin{equation}
    \|P_{\theta_0}^n - P_{\theta_1}^n\|_\text{TV} \leq 1 - \frac{2}{\delta(n) \lambda(n)}. 
\end{equation}
Theorem~\ref{thm:lecam} then establishes that $\R_\text{prior} \geq 1 / \lambda(n)$. 

\subsection{Fano's method for prioritized risk}
\label{sec:fano}

Next, we extend Fano's method in order to lower bound the prioritized risk; this operates by lower bounding the testing error \eqref{eq:reduction inequality} via Fano's inequality \cite{Cover12}. 

\begin{restatable}[Fano's method for prioritized risk]{theorem}{fano}
Consider an estimation problem over a family of distributions $\{P_\theta\}_{\theta \in \Theta}$ with a loss function of the form \eqref{eq:estimation loss}. A learner has prior $\pi$ and will receive $n$ i.i.d. observations from a chosen distribution. Let $\{\theta_v\}_{v \in \V}$ form a $(\delta, \pi)$-packing. Define a random variable $V$ corresponding to a uniform distribution over $\V$. We can then lower bound the prioritized risk:
\begin{equation}
    \R_\text{prior}(\pi, L; \Theta) \geq \delta \Bigg{[} 1 - \frac{I(V;X_1^n) + \log(2)}{\log |\V|} \Bigg{]},
\end{equation}
where $I(V; X_1^n)$ denotes the mutual information. 
\end{restatable}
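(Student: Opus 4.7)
The plan is to combine the reduction from estimation to testing (Proposition~\ref{prop:reduction}) with the standard form of Fano's inequality. Since Proposition~\ref{prop:reduction} already converts the lower bound on the prioritized risk into a lower bound on the probability of error of an associated multi-way hypothesis test, the only remaining work is to lower-bound that testing error using an information-theoretic argument, and Fano's inequality is the natural tool for this once $|\V| \geq 2$.

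Concretely, I would first apply Proposition~\ref{prop:reduction} to the given $(\delta, \pi)$-packing $\{\theta_v\}_{v \in \V}$ to obtain
\begin{equation*}
\R_\text{prior}(\pi, L; \Theta) \geq \delta \ \inf_{\Psi} \ \mathbb{P}(\Psi(X_1^n) \neq V),
\end{equation*}
where $V$ is uniform on $\V$ and, conditioned on $V=v$, the sample $X_1^n$ is drawn from $P_{\theta_v}^n$. Next, I would invoke Fano's inequality \cite{Cover12} in its standard form: for any (possibly randomized) test $\Psi : \X^n \to \V$,
\begin{equation*}
H(V \mid X_1^n) \leq \log(2) + \mathbb{P}(\Psi(X_1^n) \neq V) \cdot \log(|\V| - 1),
\end{equation*}
which upon using $\log(|\V|-1) \leq \log|\V|$ and rearranging, together with $H(V \mid X_1^n) = H(V) - I(V;X_1^n) = \log|\V| - I(V;X_1^n)$ (since $V$ is uniform), yields
\begin{equation*}
\mathbb{P}(\Psi(X_1^n) \neq V) \geq 1 - \frac{I(V;X_1^n) + \log(2)}{\log|\V|}.
\end{equation*}
Because this holds for every test $\Psi$, it holds for the infimum on the right-hand side of the reduction inequality, giving exactly the stated bound.

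I do not anticipate any real obstacle: all heavy lifting has been done in Proposition~\ref{prop:reduction}, whose use of the prior-weighted test function and the $(\delta,\pi)$-packing is what encodes the prior $\pi$ into the argument. Fano's inequality itself is agnostic to the structure of the packing, depending only on the joint distribution of $(V, X_1^n)$, and so the prior $\pi$ enters the final statement only through $\delta$ (via the definition of the packing). The one subtlety worth flagging in the write-up is that the bound is stated in terms of $\log|\V|$ rather than $\log(|\V|-1)$, which is a standard slight weakening of Fano; alternatively one could state it with $\log(|\V|-1)$ in the denominator and note the immediate corollary.
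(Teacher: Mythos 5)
Your proposal is correct and matches the paper's proof exactly: the paper also combines Proposition~\ref{prop:reduction} with Fano's inequality in the form $\inf_{\Psi} \mathbb{P}(\Psi(X_1^n) \neq V) \geq 1 - [I(V;X_1^n) + \log 2]/\log|\V|$. You simply spell out the intermediate steps (conditional entropy, the $\log(|\V|-1) \le \log|\V|$ weakening) that the paper leaves implicit.
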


\subsection{Assouad's method for prioritized risk}

We now extend Assouad's method in order to lower bound $\R_\text{prior}$; instead of reducing the problem of estimation to a single hypothesis testing problem as in Prop.~\ref{prop:reduction}, Assouad's method proceeds via a reduction to multiple binary hypothesis testing problems. We first extend the notion of \emph{Hamming separation} \citep[Ch. 7]{Duchi16} to incorporate the prior $\pi$. 

\begin{definition}[$(2\delta,\pi)$-Hamming separation]
\label{defn:hamming separation}
Consider a family of distributions given by $\{\theta_v\}_{v \in \V}$ indexed by the hypercube $\V = \{-1,1\}^d$ (for some $d \in \mathbb{N}$). This family induces a \emph{$(2\delta,\pi)$-Hamming separation} if there exists $\hat{v}: \Theta \rightarrow \{-1,1\}^d$ such that $\forall v \in \V$, we have:
\begin{equation}
    \rho(\theta_v, \theta) \geq \frac{2\delta}{\pi(\theta_v)} \sum_{i=1}^d \mathbbm{1} \Big{\{} [\hat{v}(\theta)]_j \neq v_j \Big{\}}, \ \forall \theta \in \Theta.
\end{equation}
Let $\mathbb{P}_{\pm j}$ denote the joint distribution over the (uniformly chosen) random index $V$ and data $X_1^n$ conditioned on the $j$-th coordinate $V_j = \pm 1$.
\end{definition}

\begin{restatable}[Assouad's method for prioritized risk]{theorem}{assouad}
\label{thm:assouad}
Consider an estimation problem over a family of distributions $\{P_\theta\}_{\theta \in \Theta}$ with a loss function of the form \eqref{eq:estimation loss}. A learner has prior $\pi$ and will receive $n$ i.i.d. observations from a chosen distribution. Let $\{\theta_v\}_{v \in \V= \{-1,1\}^d}$ form a $(2\delta, \pi)$-Hamming separation with $\hat{v}$. We then have:
\begin{align}
\R_\text{prior}(\pi, L; \Theta) \geq \delta \sum_{i=1}^d \inf_\Psi \ \Big{[} &\mathbb{P}_{+j} \big{(} \Psi(X_1^n) \neq +1 \big{)} + \dots \nonumber \\
&\mathbb{P}_{-j} \big{(} \Psi(X_1^n) \neq -1 \big{)} \Big{]}, \nonumber
\end{align}
where the infimum is over tests $\Psi: \X^n \rightarrow \{+1,-1\}$. 
\end{restatable}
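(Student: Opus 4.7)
The plan is to mirror the classical Assouad argument, with the crucial twist that the prior factor $\pi(\theta_v)$ multiplies the loss outside while appearing as $1/\pi(\theta_v)$ inside the Hamming separation, leaving a clean cancellation. First, I would fix an arbitrary learner $\sigma$ and define the induced hypercube-valued map $\hat{V}(X_1^n) \coloneqq \hat{v}(\sigma(X_1^n))$. Since $\{\theta_v\}_{v\in\V} \subseteq \Theta$, the learner-specific prioritized risk satisfies $\R^\sigma_\text{prior}(\pi, L; \Theta) \geq \max_{v \in \V} \pi(\theta_v)\R(\sigma, \theta_v) \geq \tfrac{1}{|\V|} \sum_{v \in \V} \pi(\theta_v) \R(\sigma, \theta_v)$, replacing the maximum by the uniform average over the packing.

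Next, I would plug in the $(2\delta, \pi)$-Hamming separation inequality pointwise: for each fixed $v$ and each realization of $X_1^n$,
\begin{equation*}
\pi(\theta_v)\, \rho(\theta_v, \sigma(X_1^n)) \geq 2\delta \sum_{j=1}^d \mathbbm{1}\{ [\hat{V}(X_1^n)]_j \neq v_j \}.
\end{equation*}
Taking expectation under $P_{\theta_v}^n$ converts indicators into probabilities, giving $\pi(\theta_v)\R(\sigma, \theta_v) \geq 2\delta \sum_{j=1}^d \mathbb{P}_{X_1^n \sim P_{\theta_v}^n}([\hat{V}]_j \neq v_j)$. Substituting into the averaged bound and interchanging the sums over $v$ and $j$ yields
\begin{equation*}
\R^\sigma_\text{prior}(\pi, L; \Theta) \geq \frac{2\delta}{|\V|} \sum_{j=1}^d \sum_{v \in \V} \mathbb{P}_{X_1^n \sim P_{\theta_v}^n}([\hat{V}]_j \neq v_j).
\end{equation*}

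For each $j$ I would split the inner sum according to the sign of $v_j$; by the definition of $\mathbb{P}_{\pm j}$ as the uniform mixture over $\{v : v_j = \pm 1\}$ paired with the corresponding data law, this gives
\begin{equation*}
\frac{1}{|\V|} \sum_{v \in \V} \mathbb{P}_{v}([\hat{V}]_j \neq v_j) = \tfrac{1}{2}\bigl[\mathbb{P}_{+j}([\hat{V}]_j \neq +1) + \mathbb{P}_{-j}([\hat{V}]_j \neq -1)\bigr],
\end{equation*}
and the factor of $\tfrac{1}{2}$ combines with $2\delta$ to produce the advertised $\delta$. Finally, since $X_1^n \mapsto [\hat{V}(X_1^n)]_j$ is a valid $\{\pm 1\}$-valued test function for each $j$, replacing it with $\inf_\Psi$ can only decrease the quantity. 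Taking an infimum over $\sigma$ on the left then yields the stated lower bound on $\R_\text{prior}$.

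The only subtle point — and the thing to flag as the main place where the proof differs from the classical one — is verifying that the prior $\pi(\theta_v)$ cancels exactly: it appears multiplicatively on the risk side but inversely inside the Hamming separation, which is precisely how Definition~\ref{defn:hamming separation} was engineered. Once this cancellation is made explicit, the prior disappears from every subsequent inequality and the remainder of the argument is essentially the standard Assouad reduction, with $\delta$ (absorbing the prior's effect) playing the role of the usual separation parameter. No information-theoretic inequalities are needed at this stage; those enter only when one further lower-bounds the binary testing errors via, e.g., Pinsker's inequality, which is downstream of the statement being proved.
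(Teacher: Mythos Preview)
Your proposal is correct and follows essentially the same approach as the paper's proof: fix an estimator, replace the supremum by the uniform average over $\V$, apply the $(2\delta,\pi)$-Hamming separation so that $\pi(\theta_v)$ cancels, swap sums, split each coordinate sum by the sign of $v_j$ to produce $\mathbb{P}_{\pm j}$, and finally lower bound by the infimum over tests. Your explicit remark about the cancellation of $\pi(\theta_v)$ is exactly the point the paper's construction is designed to enable.
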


Combining this with the variational representation of the total variation distance, we see:
\begin{equation}
\label{eq:assouad tv}
    \R_\text{prior}(\pi, L; \Theta) \geq \delta \sum_{j=1}^d \Big{[} 1 - \big{\|} P_{{+j}}^n - P_{{-j}}^n \big{\|}_\text{TV} \Big{]},
\end{equation}
where $P_{{+j}} \coloneqq 2^{1-d} \sum_{v|v_j=1} P_{\theta_v}$ (and similarly for $P_{-j}$).
Thus, similar to LeCam's method, we can obtain lower bounds on the prioritized risk by finding an appropriate packing (forming a $(2\delta,\pi)$-Hamming separation) and lower bounding the total variation distances in \eqref{eq:assouad tv}.

\section{Generalized Fano inequality for lower bounds on prioritized risk}
\label{sec:generalized fano}

We now describe techniques for lower bounding the prioritized risk for learning problems beyond estimation. We consider the general setting described in Sec.~\ref{sec:background}, where a learner $\sigma: \X^n \rightarrow \A$ receives a dataset $x_1^n \coloneqq \{x_i\}_{i=1}^n$ of $n$ i.i.d. realizations of $X$, and must output an \emph{action} $a \in \A$ (e.g., a hypothesis) that is evaluated according to a loss function $L: \Theta \times \A \rightarrow [0,\infty)$.

The key technical tool we use to obtain lower bounds on prioritized risk in this setting is a \emph{generalized} version of Fano's inequality. 
In its original form, Fano's inequality \cite{Cover12} provides lower bounds on the achievable error of estimating a signal given a potentially corrupted observation of the signal. Recent generalized Fano inequalities \cite{Chen16, Gerchinovitz20, Majumdar22} allow one to establish lower bounds on Bayes and minimax risks for various learning problems beyond estimation. Here, we present a novel generalization of Fano's inequality, which allows us to handle learning problems with \emph{unbounded} loss functions (in contrast to \citet{Chen16, Gerchinovitz20, Majumdar22}, which assume that the loss is bounded within $[0,1]$). This is particularly important in our setting for lower bounding the prioritized risk \eqref{eq:prioritized risk}, where the product $\pi(\theta) \R(\sigma, \theta)$ may not be bounded (even if the loss function $L$ is bounded). We first present a general version of our result --- which may be of independent interest --- for lower bounding the Bayes risk \eqref{eq:bayes risk}, and then use this to lower bound the prioritized risk. 

\begin{restatable}[Generalized Fano inequality for unbounded losses]{theorem}{generalizedfano}
\label{thm:generalized fano}
For any prior distribution $p$ on $\theta$, the following lower bound on the Bayes risk holds for all $\lambda > 0$:
\begin{align}
&\R_\text{Bayes}(p, L; \Theta) \coloneqq \inf_\sigma \ \int_{\Theta} \underset{X_1^n \sim P_\theta^n}{\mathbb{E}}  \big{[} L(\theta, \sigma(X_1^n)) \big{]}  \ p(d\theta) \nonumber \\
&\geq \frac{1}{\lambda} \Big{[} \rho_{\lambda,L}^\star - I(\theta; X_1^n) \Big{]},
\end{align}
where $I(\theta; X_1^n)$ is the mutual information, and 
\begin{equation}
    \rho_{\lambda,L}^\star \coloneqq - \sup_a \ \log \int_{\Theta} \exp \Big{(}-\lambda L(\theta, a) \Big{)} \ p(d\theta).
\end{equation}
\end{restatable}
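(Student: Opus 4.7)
The plan is to establish the bound via a Donsker--Varadhan variational (Gibbs) argument. The key observation that enables handling unbounded losses is that $e^{-\lambda L(\theta,a)} \in (0,1]$ for all $(\theta,a)$ with $L \ge 0$, so the log-partition quantity defining $\rho^\star_{\lambda,L}$ is always well-defined and in fact $\rho^\star_{\lambda,L} \ge 0$. This sidesteps the boundedness assumption required in previous generalized Fano inequalities.

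Step 1: Fix any (possibly randomized) estimator $\sigma$ and any realization $x_1^n$. I would apply the Donsker--Varadhan inequality
\[
\int h\, dq \;\le\; \log \int e^{h}\, dp \;+\; \KL(q\,\|\,p)
\]
with target measure $q = p(\cdot \mid x_1^n)$ (the posterior induced by the joint $\theta \sim p$, $X_1^n \mid \theta \sim P_\theta^n$), reference measure $p$, and test function $h(\theta) = -\lambda L(\theta,\sigma(x_1^n))$. Since $\sigma(x_1^n) \in \A$,
\[
\log \int e^{-\lambda L(\theta,\sigma(x_1^n))} p(d\theta) \;\le\; \sup_{a \in \A} \log \int e^{-\lambda L(\theta,a)} p(d\theta) \;=\; -\rho^\star_{\lambda,L},
\]
so rearrangement produces the pointwise bound
\[
\lambda \int L(\theta,\sigma(x_1^n))\, p(d\theta \mid x_1^n) \;\ge\; \rho^\star_{\lambda,L} - \KL\bigl(p(\cdot\mid x_1^n)\,\|\,p\bigr).
\]

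Step 2: Integrate both sides against the marginal $m(dx_1^n) = \int P_\theta^n(dx_1^n) p(d\theta)$. By Fubini the LHS becomes $\lambda \int_\Theta \R(\sigma,\theta)\, p(d\theta)$, i.e., $\lambda$ times the Bayes risk of the specific estimator $\sigma$. The standard identity $\EE_{X_1^n \sim m}\bigl[\KL\bigl(p(\cdot \mid X_1^n)\,\|\,p\bigr)\bigr] = I(\theta; X_1^n)$ converts the expected KL on the RHS into the mutual information. Dividing through by $\lambda > 0$ and taking an infimum over $\sigma$ delivers the stated bound.

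The main obstacle I anticipate is justifying the Donsker--Varadhan step and the Fubini interchange when $L$ is unbounded. Fortunately $h = -\lambda L \le 0$ is bounded \emph{above}, so $e^h \in (0,1]$ is automatically integrable under any probability measure and the DV inequality applies directly without a boundedness assumption on $h$ itself. If $\int L(\theta,\sigma(x_1^n))\, p(d\theta \mid x_1^n) = +\infty$ on a set of positive $m$-measure then the final bound is vacuous on that set and the conclusion still holds; otherwise the truncation $L_M \coloneqq L \wedge M$ followed by $M \to \infty$ (monotone convergence, together with monotone convergence for $\rho^\star_{\lambda,L_M} \to \rho^\star_{\lambda,L}$) secures all integral manipulations. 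This handling of $L$ through $e^{-\lambda L}$ rather than through $L$ itself is precisely what extends the result past the bounded-loss regime of prior work.
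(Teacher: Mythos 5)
Your argument is correct, and it reaches the same bound via the same underlying tool (Donsker--Varadhan plus identification with mutual information), but the decomposition is genuinely different from the paper's. The paper applies DV globally on the pair $(\theta, X_1^n)$: it takes $P = p(\theta, X_1^n)$ and $Q = p(\theta)\,q(X_1^n)$ for an arbitrary auxiliary marginal $q$ on $X_1^n$, pushes the infimum over estimators through via Fubini--Tonelli, bounds the resulting log-partition term by $-\rho^\star_{\lambda,L}$, and finally optimizes over $q$ using the "golden formula" $I(\theta;X_1^n) = \inf_q \KL\bigl(p(\theta,X_1^n)\,\|\,p(\theta)q(X_1^n)\bigr)$. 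You instead condition on $X_1^n = x_1^n$, apply DV pointwise with $q = p(\cdot \mid x_1^n)$ against the prior $p$, get the bound $\lambda\int L(\theta,\sigma(x_1^n))\,p(d\theta\mid x_1^n) \ge \rho^\star_{\lambda,L} - \KL(p(\cdot\mid x_1^n)\|p)$, and then integrate over the marginal, using the direct identity $\EE_{X_1^n\sim m}\bigl[\KL(p(\cdot\mid X_1^n)\|p)\bigr] = I(\theta;X_1^n)$. This conditional route is a bit cleaner: there is no auxiliary $q$ to optimize over, and the supremum over estimators localizes immediately to a supremum over $a \in \A$ at each fixed $x_1^n$ (since $\sigma(x_1^n)$ is just one action), so you never need to interchange $\sup_\sigma$ with an expectation over $\theta$. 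This is actually an advantage over the paper's exposition, which at one point moves $\sup_\sigma$ inside \emph{both} expectations to produce $\EE_{p(\theta)}\sup_a\exp(-\lambda L(\theta,a))$ before equating it to $\sup_a\EE_{p(\theta)}\exp(-\lambda L(\theta,a))$ --- a step that is not actually an equality and would need the same localization argument you use (sup over $\sigma$ inside $\EE_q$ only, yielding $\sup_a$ \emph{outside} $\EE_p$). Your version avoids this entirely. Your remarks on why unbounded losses pose no issue --- $e^{-\lambda L} \le 1$ makes the log-partition term finite and nonnegative, and the DV test function being bounded above suffices --- are consistent with the intent of the paper's proof and make the handling of the unbounded case explicit.
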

\begin{proof}
The proof is in App.~\ref{app:proofs}. In contrast to \cite{Chen16, Gerchinovitz20,Majumdar22}, we use the Donsker-Varadhan change of measure inequality \citep[Thm. 2.3.2]{Gray11} to handle unbounded losses. 
\end{proof}
Consistent with intuition, this bound increases when the data $X_1^n$ do not provide much information about the underlying $\theta$ (i.e., when the mutual information is small). The term $\rho_{\lambda,L}^\star$ is related to the best achievable (exponentiated) average loss when one simply chooses an action $a$ \emph{without} observing any data. This term may be easily computed when $\A$ and $\Theta$ are finite; here, the supremum over $a$ and the expectation can both be computed exactly. In addition, one may also compute a bound when $\A$ is finite, but $\Theta$ is not (by computing or bounding the expectation above with high probability via sampling and applying a concentration inequality; the supremum over $a$ can then be computed by enumeration). We also note that $\rho_{\lambda,L}^\star$ is similar to the ``small-ball probability" that appears in previous Fano inequalities \citet{Chen16, Gerchinovitz20}. While there is no general recipe for bounding the small-ball probability, this can be achieved on a case-by-case basis for particular examples (see \citet{Chen16}). The quantity $\rho_{\lambda,L}^\star$ similarly needs to be computed on a case-by-case basis in general.

\begin{corollary}[Prioritized risk lower bound via generalized Fano inequality]
\label{corollary: generalized fano}
For any distribution $p$ on $\Theta$, we have the following lower bound for all $\lambda > 0$:
\begin{align}
    \R_\text{prior}(\pi, L; \Theta) \geq \frac{1}{\lambda} \Big{[} \rho^\star_{\lambda, L^\pi} - I(\theta; X_1^n) \Big{]},
\end{align}
where $I(\theta; X_1^n)$ is the mutual information (computed using $p$ for $\theta$), and 
\begin{equation}
\rho^\star_{\lambda, L^\pi} \coloneqq - \sup_a \ \log \int_{\Theta} \exp \Big{(}-\lambda \pi(\theta) L(\theta, a) \Big{)} \ p(d\theta). \nonumber
\end{equation}
\end{corollary}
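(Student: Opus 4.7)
The plan is to derive the corollary as a direct application of Theorem~\ref{thm:generalized fano} to a suitably modified loss, after first reducing the prioritized (supremum) risk to a Bayes (integrated) risk. The key observation is that for any probability measure $p$ on $\Theta$ and any nonnegative function $f$, $\sup_{\theta \in \Theta} f(\theta) \geq \int_\Theta f(\theta) \, p(d\theta)$. Applying this with $f(\theta) = \pi(\theta) \R(\sigma,\theta)$ yields, for each fixed learner $\sigma$,
\begin{equation*}
\sup_{\theta \in \Theta} \pi(\theta) \R(\sigma,\theta) \;\geq\; \int_\Theta \pi(\theta) \, \EE_{X_1^n \sim P_\theta^n}\bigl[L(\theta,\sigma(X_1^n))\bigr] \, p(d\theta).
\end{equation*}

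Next I would introduce the modified loss $L^\pi(\theta, a) \coloneqq \pi(\theta) L(\theta, a)$, which is nonnegative since $\pi > 0$ and $L \geq 0$, hence satisfies the hypotheses of Theorem~\ref{thm:generalized fano}. Since $\pi(\theta)$ does not depend on $X_1^n$, it can be pulled inside the inner expectation, so the right-hand side above is exactly the learner-specific Bayes risk of $L^\pi$ under prior $p$. Taking an infimum over $\sigma$ on both sides gives
\begin{equation*}
\R_\text{prior}(\pi, L; \Theta) \;\geq\; \inf_\sigma \int_\Theta \EE_{X_1^n \sim P_\theta^n}\bigl[L^\pi(\theta,\sigma(X_1^n))\bigr] \, p(d\theta) \;=\; \R_\text{Bayes}(p, L^\pi; \Theta).
\end{equation*}

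I would then invoke Theorem~\ref{thm:generalized fano} applied to the loss $L^\pi$ (rather than $L$), yielding, for every $\lambda > 0$,
\begin{equation*}
\R_\text{Bayes}(p, L^\pi; \Theta) \;\geq\; \frac{1}{\lambda}\bigl[\rho^\star_{\lambda, L^\pi} - I(\theta; X_1^n)\bigr],
\end{equation*}
where by the definition in the theorem $\rho^\star_{\lambda, L^\pi} = -\sup_a \log \int_\Theta \exp(-\lambda \pi(\theta) L(\theta,a)) \, p(d\theta)$, which matches the definition in the corollary verbatim. The mutual information $I(\theta; X_1^n)$ is determined by the joint distribution of $\theta$ (under $p$) and $X_1^n$ (under $P_\theta^n$), which is identical in both settings, so no adjustment is needed there. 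Chaining the two inequalities completes the proof.

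There is no substantive obstacle: the only nontrivial step is the $\sup \geq \int$ reduction that converts the prioritized-risk problem into a Bayes-risk problem for the reweighted loss $L^\pi$; the remainder is bookkeeping to verify that the definition of $\rho^\star_{\lambda, L^\pi}$ produced by Theorem~\ref{thm:generalized fano} with this substitution agrees with the expression stated in the corollary.
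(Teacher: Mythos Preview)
Your proposal is correct and follows essentially the same approach as the paper: define $L^\pi(\theta,a)\coloneqq \pi(\theta)L(\theta,a)$, use the fact that a supremum dominates any average to obtain $\R_\text{prior}(\pi,L;\Theta)\geq \R_\text{Bayes}(p,L^\pi;\Theta)$, and then apply Theorem~\ref{thm:generalized fano} to $L^\pi$. Your write-up is simply more explicit about the bookkeeping (pulling $\pi(\theta)$ inside the expectation, matching the $\rho^\star$ definitions, noting the mutual information is unchanged), but there is no substantive difference.
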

\begin{proof}
    The result follows directly by combining Thm.~\ref{thm:generalized fano} with the following (which follows from the fact that a supremum is lower bounded by an average): $\R_\text{prior}(\pi, L; \Theta) \geq \R_\text{Bayes}(p, L^\pi; \Theta)$,
    where $L^\pi(\theta, a) \coloneqq \pi(\theta) L(\theta, a)$.
\end{proof}

We note that the bound above holds for any choice of $p$ and $\lambda$; thus, we are free to choose $p$ and $\lambda$ judiciously in order to obtain a lower bound on the prioritized risk.

\section{Examples}
\label{sec:examples}

We illustrate the ability of the prioritized risk framework to provide insights into tradeoffs between prior information and learning performance for prior-informed versions of: (i) Bernoulli mean estimation, (ii) logistic regression, and (iii) RL with environments drawn from Zipfian distributions. 

\subsection{Bernoulli mean estimation with priors}

We start by considering the problem of estimating the mean of a Bernoulli distribution in the presence of prior information. 
Suppose we have a family of Bernoulli distributions $P_\theta$ over $\X \in \{0, 1\}$; the distributions are parameterized using the mean $\theta \in \Theta \coloneqq [0,1]$. Suppose a learner has a prior $\pi$ over $\theta$. The learner's goal is to estimate $\theta$ given data from $P_\theta$. The learner $\sigma: \X^n \rightarrow \Theta$ outputs an estimate $\hat{\theta} \coloneqq \sigma(x_1^n)$ and is evaluated using the loss $L(\theta, \hat{\theta}) \coloneqq |\theta - \hat{\theta}|$.

In order to obtain numerical results, we choose a prior of a particular form. Let $\pi$ correspond to the density function of a Beta distribution Beta($\alpha=1$, $\beta=2$); this prior assigns higher weight to low values of the mean $\theta$ (specifically, $\pi$ is linear in $\theta$ with $\pi(0) = 2$ and $\pi(1) = 0$). We can then apply LeCam's method for lower bounding the prioritized risk (Thm.~\ref{thm:lecam}). By applying Pinsker's inequality and the tensorization property of the KL divergence, we obtain:
\begin{equation}
    \R_\text{prior}(\pi, L; \Theta) \geq \frac{\delta}{2} \Bigg{[} 1 - \sqrt{\frac{n \text{KL}(P_{\theta_0} \| P_{\theta_1})}{2}} \Bigg{]},
\end{equation}
where $\{\theta_0, \theta_1\}$ forms a $(\delta, \pi)$-packing. Since the KL divergence can be computed analytically for Bernoulli distributions, we can then find a value of $\delta$ that maximizes this bound for each $n$. 

Fig.~\ref{fig:bernoulli} plots the lower bound on the prioritized risk $\R_\text{prior}$ as a function of the number of samples $n$. As described in Sec.~\ref{sec:implications for learning problems}, this establishes a fundamental tradeoff for learners for this problem (for each value of $n$). We also compare the lower bound for the non-uniform prior Beta($\alpha=1$, $\beta=2$) with the lower bound for a uniform prior ($\pi(\theta) = 1, \forall \theta \in [0,1]$), which corresponds to the minimax risk.

In Appendix~\ref{app:upper bounds}, we present \emph{upper bounds} on prioritized risk for this problem computed using (i) Bayesian inference with prior $\pi$ and (ii) a learner that does not exploit the prior (Bayesian inference with uniform prior). As expected, Bayesian inference with $\pi$ achieves a lower learner-specific prioritized risk. However, we also demonstrate that Bayesian inference with $\pi$ is not optimal in general from the perspective of the prioritized risk; we do this by constructing a different learner that achieves a lower learner-specific prioritized risk. This thus motivates the search for learning algorithms that achieve optimal prioritized risk. 



 \begin{figure}[t]
 \begin{center}
 \includegraphics[width=0.99\columnwidth]{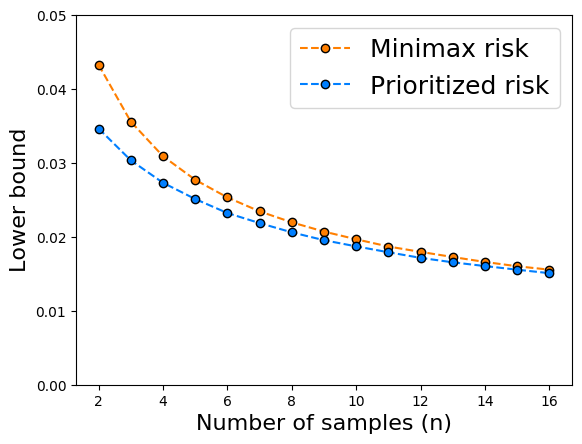}
 \end{center} 
 \vspace{-15pt}
 \caption{Lower bounds on $\R_\text{prior}(\pi, L; \Theta)$ and $\R_\text{minimax}(L; \Theta)$ for Bernoulli mean estimation.  \label{fig:bernoulli}}
 \vspace{-10pt}
 \end{figure}

 \subsection{Logistic regression with directional priors}
 \label{sec:logistic regression}

 We now consider the problem of logistic regression with priors, and utilize Assouad's method to obtain lower bounds on the prioritized risk. Given a fixed set of regressors $\{z_i\}_{i=1}^n$, where $z_i \in \RR^d$, the logistic regression model assigns the following probability to the label $y \in \{-1,1\}$:
 \begin{equation}
 \label{eq:logistic model}
     P(Y_i=y | z_i, \theta) = \frac{1}{1 + \exp(-y z_i^T \theta)}.
 \end{equation}
 The goal of the learner is to infer the unknown parameter $\theta \in \RR^d$ given $n$ observations of labels $\{y_i\}_{i=1}^n$. Here, we use the $L_1$-error $L(\theta, \hat{\theta}) \coloneqq \|\theta - \hat{\theta}\|_1$ to evaluate the learner. 

The learner has a prior $\pi$ on the parameter $\theta$. Let $\V \coloneqq \{-1,1\}^d$ denote the vertices of the hybercube, and suppose that $\pi$ is normalized such that $\pi(\theta_{v_j}) + \pi(\theta_{v_{-j}}) = 1$. Define:
\begin{equation}
    \lambda_j \coloneqq \frac{1}{4} \Bigg{[}\frac{1}{\pi(\theta_{v_j})} + \frac{1}{\pi(\theta_{v_{-j}})} \Bigg{]}. 
\end{equation}
This quantity captures \emph{directional} biases imposed by the prior. Specifically, given the logistic model \eqref{eq:logistic model}, we see that if $\pi(\theta_{v_{j}})$ is higher than $\pi(\theta_{v_{-j}})$, this implies confidence that the $j$-th component of $z$ being positive will lead to the label $y$ being positive (and that the $j$-th component of $z$ being negative will lead to the label $y$ being negative). Such asymmetries are captured by $\lambda$. Specifically, if the prior is symmetric such that $\pi(\theta_{v_{j}}) = \pi(\theta_{v_{-j}}) = 0.5$, then $\lambda_j = 1$. If the prior is asymmetric, then $\lambda_j > 1$ (with the distance from 1 capturing the degree of asymmetry). 

Defining $[\hat{v}(\theta)]_j \coloneqq \text{sign}(\theta_j)$, we see that $\V$ forms a $(\delta,\pi)$-Hamming separation (Defn.~\ref{defn:hamming separation}). We can thus apply Assouad's method (Thm.~\ref{thm:assouad}) to lower bound the prioritized risk  (see Appendix~\ref{app:proofs} for the proof):
\begin{equation}
\label{eq:logistic bound}
    \R_\text{prior}(\pi, L; \Theta) \geq \frac{1}{16} \cdot \frac{d^\frac{3}{2}}{\sqrt{\sum_{j=1}^d \sum_{i=1}^n \lambda_j^2 z_{ij}^2}}.
\end{equation}
As a special case, if $\lambda_j \eqqcolon \lambda \ (\forall j$), we obtain:
\begin{equation}
\label{eq:logistic bound simplified}
    \R_\text{prior}(\pi, L; \Theta) \geq \frac{1}{16} \cdot \frac{d^\frac{3}{2}}{\lambda \|Z\|_\text{Fr}},
\end{equation}
where $\|Z\|_\text{Fr}$ is the Frobenius norm of the matrix $Z \in \RR^{d \times n}$ consisting of the regressors $z_i$. 

We observe that for a fixed dimension $d$, the bound \eqref{eq:logistic bound simplified} on $\R_\text{prior}$ is determined by the product between $\lambda$ and $\|Z\|_\text{Fr}$ (and, similarly for \eqref{eq:logistic bound}, the bound depends on the products $\lambda_j z_{ij}$). Thus, for a given prior $\pi$, we see that regressors with a smaller $L_2$-norm induce a \emph{more stringent} tradeoff between prior information and risk of a learner. As a concrete example, we build on the analysis in Example~\ref{sec:implications for learning problems}. Consider two learning problems corresponding to two sets of regressors with $\|Z\|_\text{Fr} \leq \|Z'\|_\text{Fr}$. Suppose we have a learner $\sigma'$ that achieves low risk for high values of the prior using $Z'$:
\begin{equation}
\R(\sigma', \theta; Z') < \beta' \coloneqq \frac{1}{16} \cdot \frac{d^\frac{3}{2}}{\lambda \|Z'\|_\text{Fr}}, \ \forall \theta \ \text{s.t.} \ \frac{1}{2} \leq \pi(\theta) \leq 1. \nonumber
\end{equation}
Now, suppose we have a learner $\sigma$ that achieves the \emph{same} level of performance as $\sigma'$ for high values of the prior using regressors $Z$:
\begin{align}
\R(\sigma, \theta; Z) < \beta' &\coloneqq \frac{1}{16} \cdot \frac{d^\frac{3}{2}}{\lambda \|Z'\|_\text{Fr}}, \ \forall \theta \ \text{s.t.} \ \frac{1}{2} \leq \pi(\theta) \leq 1 \nonumber \\
&\leq \frac{1}{16} \cdot \frac{d^\frac{3}{2}}{\lambda \|Z\|_\text{Fr}}.
\end{align}
Then, building on Example~\ref{sec:implications for learning problems}, we see that for the problem with regressors $Z'$, 
\begin{equation}
    \exists \theta \ \text{s.t.} \ \pi(\theta) < \frac{1}{2}, \ \text{where} \ \R(\sigma', \theta; Z') > \frac{1}{8} \cdot \frac{d^\frac{3}{2}}{\lambda \|Z'\|_\text{Fr}},
\end{equation}
whereas using regressors $Z$, we have:
\begin{align}
    \exists \theta \ \text{s.t.} \ \pi(\theta) < \frac{1}{2}, \ \text{where} \ \R(\sigma, \theta; Z) &> \frac{1}{8} \cdot \frac{d^\frac{3}{2}}{\lambda \|Z\|_\text{Fr}} \\
    &\geq \frac{1}{8} \cdot \frac{d^\frac{3}{2}}{\lambda \|Z'\|_\text{Fr}}. \nonumber
\end{align}
The bounds on prioritized risk thus suggest that in order for $\sigma$ to achieve the same level of performance (using $Z$) as $\sigma'$ (using $Z'$) for high values of the prior, $\sigma$ must sacrifice a \emph{greater} level of performance for low prior values.  


\subsection{Reinforcement Learning in Zipfian environments}

In our final example, we consider a reinforcement learning (RL) setting where an agent interacts with environments that are drawn from a Zipfian (i.e., discrete power law) distribution. Specifically, we build on the Zipfian Gridworld environments from \citet{Chan22}, where an agent must find objects using visual feedback. Each environment (Fig.~\ref{fig:zipf} left) consists of a grid-world with four rooms containing 20 objects. There are a fixed set of 400 environments that the agent may be deployed in; the agent's start location, target object, as well as the other object shapes, colors, and locations are fixed within each environment. In each episode, the agent receives a top-down camera view of its immediate surroundings along with a visual depiction of the target object. The agent receives a loss of 0 for the episode if it reaches the target; the episode ends if the agent touches any other object, and a loss equal to the number of steps taken is then assigned. 

 \begin{figure}[t]
 \begin{center}
 \includegraphics[width=0.40\columnwidth]{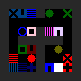}
\includegraphics[width=0.54\columnwidth]{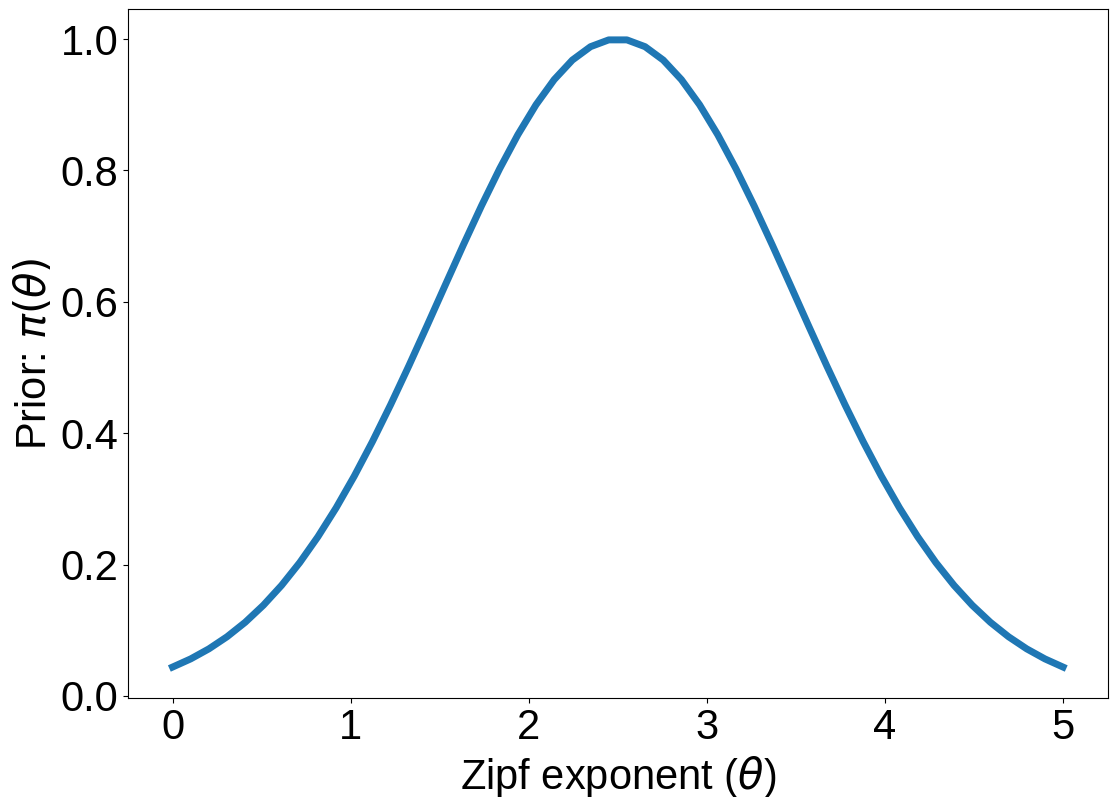}
 \end{center} 
 \vspace{-10pt}
 \caption{L: A Zipfian Gridworld environment. R: Prior $\pi(\theta)$.  \label{fig:zipf}}
 \end{figure}

\begin{figure}[t]
 \begin{center}
 \includegraphics[width=0.8\columnwidth]{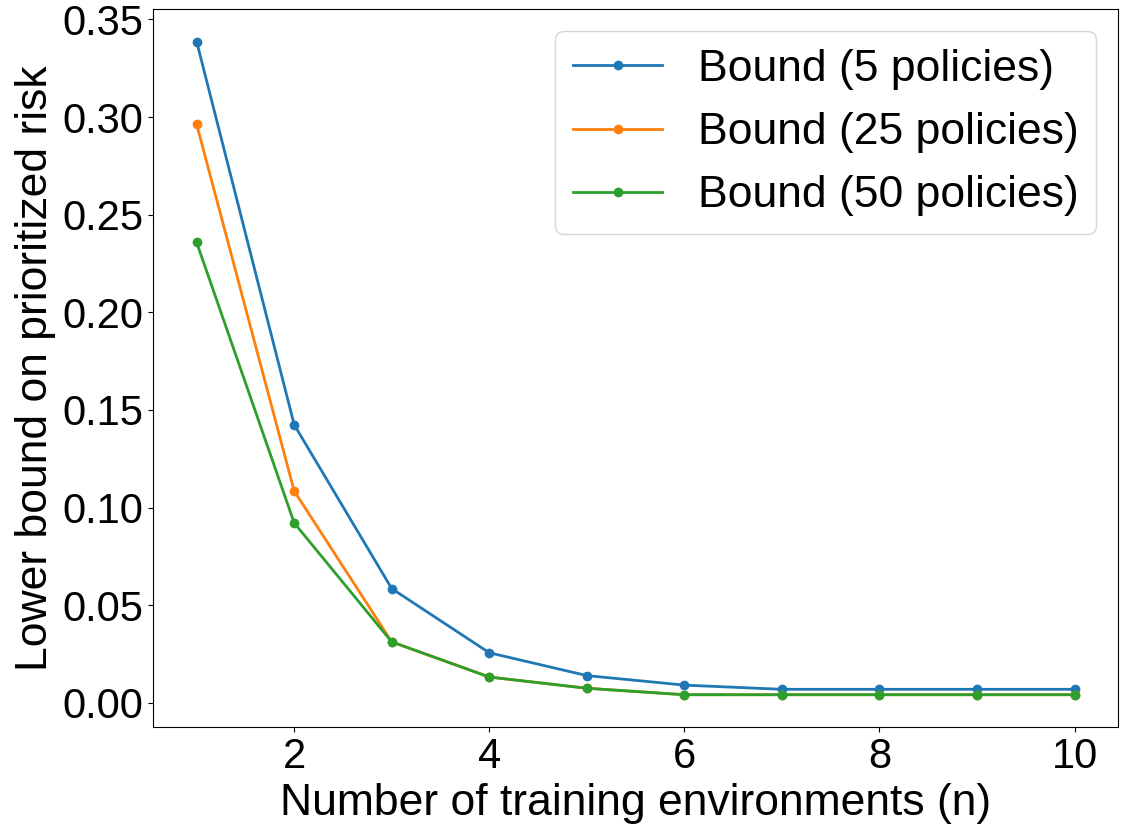}
 \end{center} 
 \vspace{-10pt}
 \caption{Lower bounds on prioritized risk (Zipfian environments).  \label{fig:zipf bounds}}
 \vspace{-25pt}
 \end{figure}

Let $\X = \{e_x\}_{x=1}^{400}$ denote the set of possible environments. The distribution $P_\theta$ over environments is defined by a discrete power law (Zipfian distribution) $p(x) \propto \frac{1}{x^\theta}$,
where the exponent $\theta$ determines how skewed the distribution is. Such a distribution captures the heavy-tailed nature of many real-world environments \cite{Chan22}. In our setting, $\theta$ is chosen from a set $\Theta$ of 50 possible exponents between $0$ and $5$. Here, we use the prioritized risk to study fundamental limits on the agent's learning performance when $\theta$ (i.e., how heavy-tailed the distribution is) does not conform to the learner's prior. The agent does not have \emph{a priori} knowledge of $\theta$, but has a prior $\pi(\theta) = \exp(-(\theta - 2.5)^2)$, which captures the prior knowledge that real-world distributions are likely to have Zipf exponents close to 2.5 (Fig.~\ref{fig:zipf} right). For training, the agent receives $n$ environments from $P_\theta$. Based on this training set, it must choose a policy (i.e., a mapping from images to actions) from a set $\A$. Here, we choose a (pre-computed) set $\A$ of policies, each trained on a given Zipfian exponent $\theta$ using IMPALA \cite{Espeholt18}.  

We use the generalized Fano inequality to compute lower bounds on the prioritized risk (Cor.~\ref{corollary: generalized fano}); since $\Theta$ and $\A$ are finite, we can compute all quantities in the bound exactly. Fig.~\ref{fig:zipf bounds} plots the resulting bounds for different sizes of $\A$. We see that a learning agent that has access to a smaller set $\A$ faces a \emph{more stringent} tradeoff between prior information and expected cost. Analogous to the analysis in Sec.~\ref{sec:logistic regression}, the bound suggests that for an agent $\sigma_{|\A| = 5}$ (with access to $\A$ of size 5) to achieve the same level of performance as $\sigma_{|\A| = 50}$ (with access to $\A$ of larger size) for high values of the prior (i.e., $\theta$ close to 2.5), $\sigma_{|\A| = 5}$ must sacrifice a \emph{greater} level of performance for less likely values of $\theta$.

\section{Related Work}
\label{sec:related work}


{\bf No-free-lunch theorems and minimax lower bounds.} The no-free-lunch (NFL) theorem \cite{Wolpert96}\citep[Ch. 5]{Shalev14} establishes fundamental limits on learners that have no prior information. 
One statement of the NFL theorem is via the minimax risk: if one considers binary classification tasks and allows for any distribution whatsoever over samples, then the minimax is lower bounded by a constant (for all sample sizes $n$). In general, lower bounds on the minimax risk for a given learning problem allow us to formalize the fundamental limits of learning on that problem. Minimax lower bounds have been established for many learning problems of practical interest, e.g., sparse linear regression \cite{Raskutti11}, nonparametric classification \cite{Yang99a}, crowdsourcing \cite{Zhang14}, differentially private learning \cite{Duchi13}, and inverse reinforcement learning \cite{Komanduru21}.  Such lower bounds are typically proved using information-theoretic techniques, e.g., LeCam's method, Fano's inequality, or Assouad's method \cite{Tsybakov08, Yang99, Yu97}. As highlighted in Sec.~\ref{sec:background}, the minimax risk does not reason about prior knowledge beyond the weak knowledge that the data-generating distribution belongs to a certain set. 


{\bf Lower bounds on Bayes risk.} The Bayes risk (Eq. \eqref{eq:bayes risk}) considers the average risk relative to a prior $\pi$ over data-generating distributions (i.e., over $\theta)$. 
Bayes risk lower bounds are known for a variety of problems including  finite-dimensional estimation problems with quadratic losses \cite{VanTrees04, Brown90, Gill95}, estimation problems involving generalized linear models \cite{Chen16}, and high-dimensional sparse linear regression \cite{Chen16}. As highlighted in Sec.~\ref{sec:background}, the Bayes risk framework assumes that the learner has access to the ``true" distribution over $\theta$, and thus does not capture settings where reality does not conform to the prior. 

More refined versions of the Bayes risk include ``local" versions \cite{Zhang06} where one considers a partition of the parameter space $\Theta$ into disjoint balls (of small size) and computes a lower bound on the Bayes risk defined relative to a local prior. 
This quantity is distinct from the one we consider here. 
The work presented by \citet{Haussler94} considers the tradeoff between prior knowledge and sample complexity by analyzing the impact of a misspecified prior on the Bayes risk. However, \citet{Haussler94} only provide upper bounds on sample complexity in the setting with misspecified priors; lower bounds are only provided for the case where the learner has access to nature's prior. More recently, \citet{Chollet19} considers a formalism based on algorithmic information theory for capturing tradeoffs between prior knowledge and learning efficiency (but computing the relevant quantities is computationally intractable in general).



{\bf PAC-Bayes generalization bounds.}  
One motivation for studying fundamental limits on learners with prior information comes from PAC-Bayes theory \cite{McAllester99}, which provides \emph{upper bounds} on risk for learners equipped with prior information. In PAC-Bayes learning, one defines a prior distribution over the space of hypotheses, and then obtains a bound on the risk that holds for any choice of posterior. This bound can be operationalized by fixing a data-independent prior and then finding a data-dependent posterior that minimizes the PAC-Bayes bound \cite{Dziugiate17}. 
The idea of using such prior information to improve generalization bounds is also a key component of the ``luckiness" framework \cite{Shawe-Taylor98}, Occam's razor bounds \cite{Blumer87, Langford05}, and the minimum description length principle \cite{Rissanen89}. PAC-Bayes provides some of the tightest known generalization bounds for neural networks for supervised learning \cite{Dziugiate17, Neyshabur17, Neyshabur17a, Bartlett17, Arora18, Rivasplata19, Perez-Ortiz20, Jiang20, Lotfi22} and policy learning \cite{Fard12, Majumdar21, Veer20, Ren21}. 

These results provide a motivation for the question we consider here: how can we establish fundamental limits on learners with imperfect prior information? Empirically, one often observes that the strength of PAC-Bayes bounds depends significantly on how good the prior is. If the data-generating distribution conforms to the PAC-Bayes prior (e.g., if the prior achieves low expected loss on the data-generating distribution), then the resulting PAC-Bayes bound can be strong (i.e., the upper bound on the expected loss can be low). However, if the prior is not well-matched with the data-generating distribution, the posterior may have to deviate significantly from the prior, resulting in a poor upper bound. Thus, one observes a tradeoff with PAC-Bayes bounds: if the data-generating distribution conforms to the learner’s prior, one can achieve low risk; however, if the data-generating distribution does not conform to the learner’s prior, the PAC-Bayes bound on the risk is high.

This tradeoff is very similar to the one the prioritized risk attempts to capture. However, while the PAC-Bayes framework provides upper bounds on risk, we focus on lower bounds. We are thus motivated by the goal of establishing fundamental tradeoffs (i.e., ones that hold for any learning algorithm) between the accuracy of prior information and learning performance, while the PAC-Bayes approach provides a \emph{particular} such tradeoff. An interesting question for future work is thus the following: does the PAC-Bayes framework (or a variant of it) provide the optimal tradeoff (i.e., does the upper bound from PAC-Bayes match lower bounds on the prioritized risk) for any classes of learning problems? If not, can we find algorithms that are optimal from the perspective of the prioritized risk?


\section{Discussion and conclusions}

We have introduced the notion of prioritized risk, which differs from classical notions of minimax and Bayes risk by allowing us to study fundamental tradeoffs in settings where reality does not conform to the learner's prior. Specifically, lower bounds on the prioritized risk for a given problem establish that it is \emph{impossible} for both the risk of a learner and the prior to be low for all distributions. We have extended classical techniques based on the methods of LeCam, Assouad, and Fano for obtaining lower bounds on the prioritized risk for estimation. We also presented a technique for obtaining lower bounds in more general settings via a novel generalized Fano inequality (which may be of independent interest for lower bounding Bayes risk in settings with unbounded loss functions). 


\subsection{Future work}

There are a number of exciting directions for future work. First, developing prior-informed learning algorithms that are optimal from the perspective of the prioritized risk would be of practical interest (e.g., based on variants of PAC-Bayes bounds). Empirically, one often observes tradeoffs between accuracy of prior information and learning performance: for some problems, the choice of neural network architecture or regularization technique (which can both be interpreted as forms of inductive bias / prior knowledge) seem to have little impact on learning performance, while for other problems these choices can have significant impact. 
This raises the following important question: do the empirical observations reflect fundamental tradeoffs (as formalized by the prioritized risk), or are they artifacts of the specific learning algorithms we happen to be using? If the latter, this motivates the search for different learning algorithms for our problems of interest. 


Second, an interesting theoretical direction is to explore if there are settings where the optimal asymptotic dependence on $n$ (number of examples) for prioritized risk and minimax/Bayes risk are different (analogous to the difference between universal learning \cite{Bousquet21} and PAC learning). To expand on this, consider a learner $\sigma_\text{minimax}$ that achieves the optimal asymptotic rate for the minimax risk. There are then two possibilities. First, it may be the case that $\sigma_\text{minimax}$ does not achieve the optimal asymptotic rate for the prioritized risk. The practical implication is that one should use a different learner based on whether or not one has a uniform or non-uniform prior (even for large amounts of data). Second, it may be the case that $\sigma_\text{minimax}$ does achieve the optimal asymptotic rate for the prioritized risk. This implies that the same learner achieves a different asymptotic tradeoff between prior knowledge and learning performance depending on whether or not one has a uniform prior. Each possibility is interesting in its own right, and provides insights into the tradeoffs between prior knowledge and learning performance for the problem under consideration.

Finally, we are interested in using the prioritized risk framework to understand how much prior information (``nature") one needs in order to achieve a certain level of learning performance (``nurture") in a broader set of applications of practical interest (e.g., RL in robotics). 

\section*{Acknowledgements}
The author would like to thank Olivier Bousquet for a helpful discussion on this work, and Asher Hancock, Nate Simon, David Snyder, and Vince Pacelli for providing feedback on an early draft. This work was partially supported by the NSF CAREER Award [\#2044149] and the Office of Naval Research [N00014-23-1-2148].


\balance
\bibliography{lower-bounds}
\bibliographystyle{icml2023}

\newpage
\appendix
\onecolumn
\section{Proofs}
\label{app:proofs}

\estimationreduction*
\begin{proof}
Let $\sigma: \X^n \rightarrow \Theta$ be an estimator. 
We first observe:
\begin{align}
\R_\text{prior}^\sigma(\pi, L; \Theta) &= \sup_{\theta \in \Theta} \ \pi(\theta) \underset{X_1^n \sim P_\theta^n}{\mathbb{E}} \big{[} \rho(\theta, \sigma(X_1^n)) \big{]} \\
&= \sup_{\theta \in \Theta} \ \underset{X_1^n \sim P_\theta^n}{\mathbb{E}} \big{[} \pi(\theta) \rho(\theta, \sigma(X_1^n)) \big{]} \\
&\geq \sup_{\theta \in \Theta} \ \underset{X_1^n \sim P_\theta^n}{\mathbb{E}} \big{[} \delta \cdot \mathbbm{1} \Big{[} \pi(\theta) \rho(\theta, \sigma(X_1^n)) \geq \delta \big{]} \Big{]} \\
&= \sup_{\theta \in \Theta} \ \delta \cdot P_\theta \big{[} \pi(\theta) \rho(\theta, \sigma(X_1^n)) \geq \delta \big{]}. \label{eq:prob-lower-bound}
\end{align}
Now consider a family of distributions $\{\theta_v \}_{v \in \V}$ that forms a $(\delta, \pi)$-packing, and define the prior-weighted test function:
\begin{equation}
\Psi(x_1^n) \coloneqq \underset{v \in \V}{\text{argmin}} \ \pi(\theta_v) \rho(\theta_v, \sigma(x_1^n)).  
\end{equation}
Now suppose that $\pi(\theta_v) \rho(\theta_v, \sigma(x_1^n)) < \delta$. Then, we claim that $\psi(x_1^n) = v$. To see this, suppose not. Then, 
\begin{align}
\exists v' \ \text{s.t.} \  \ &\pi(\theta_{v'}) \rho(\theta_{v'}, \sigma(x_1^n)) < \pi(\theta_v) \rho(\theta_{v}, \sigma(x_1^n)) \\
\implies \ \ &\rho(\theta_{v'}, \sigma(x_1^n)) < \frac{\pi(\theta_v) \rho(\theta_{v}, \sigma(x_1^n))}{\pi(\theta_{v'})} < \frac{\delta}{\pi(\theta_{v'})}.
\end{align}
Thus, $\sigma(x_1^n)$ is in the $\rho$-ball of radius $\delta/\pi(\theta_{v'})$ around $\theta_{v'}$. But, we said that $\sigma(x_1^n)$ is in the $\rho$-ball of radius $\delta/\pi(\theta_{v})$ around $\theta_{v}$, and that the balls are non-overlapping (since we have a $(\delta, \pi)$-packing). Thus,  $\psi(x_1^n) = v$. 

Considering the contrapositive, we see that if $\psi(x_1^n) \neq v$, then $\rho(\theta_v, \sigma(x_1^n)) \geq \delta / \pi(\theta_v)$. Averaging over $\V$, we see:
\begin{align}
    \sup_\theta \ \mathbb{P} \Big{[} \rho(\theta, \sigma(X_1^n)) \geq \frac{\delta}{\pi(\theta)} \Big{]} &\geq \frac{1}{|\V|}  \sum_{v \in \V} \mathbb{P} \Big{[} \rho(\theta_v, \sigma(X_1^n)) \geq \frac{\delta}{\pi(\theta_v)} \ \Big{|} \ V=v \Big{]} \\
    &\geq \frac{1}{|\V|}  \sum_{v \in \V} \mathbb{P} \Big{[} \Psi(X_1^n) \neq v \ \Big{|} \ V=v \Big{]} \\
    &\eqqcolon \mathbb{P}(\Psi(X_1^n) \neq V).
\end{align}
Combining this with \eqref{eq:prob-lower-bound} establishes the claim in the proposition. 

\end{proof}

\lecam*
\begin{proof}
The proof follows directly by combining Proposition~\ref{prop:reduction} with the well-known variational representation of the total variation distance (see, e.g., \citep[Proposition 2.17]{Duchi16}): for distributions $P_0$ and $P_1$ on sample space $\X$, we have
\begin{equation}
    \inf_\Psi \ \{P_0(\Psi(X) \neq 0) + P_1(\Psi(X) \neq 1)\} = 1 - \|P_0 - P_1\|_\text{TV}, \nonumber
\end{equation}
where the infimum is over test functions $\Psi: \X \rightarrow \{0,1\}$. 
\end{proof}

\fano*
\begin{proof}
    The proof follows directly by combining Proposition~\ref{prop:reduction} with Fano's inequality \cite{Cover12}:
    \begin{equation}
        \inf_{\Psi} \ \mathbb{P} (\Psi (X_1^n) \neq V) \geq 1 - \frac{I(V;X_1^n) + \log(2)}{\log |\V|}.
    \end{equation}
\end{proof}

\assouad*
\begin{proof}
Fix an arbitrary estimator $\sigma: \X^n \rightarrow \Theta$. We then have:
\begin{align}
    \R_\text{prior}^\sigma(\pi, L; \Theta) &= \sup_{\theta \in \Theta} \ \pi(\theta) \underset{X_1^n \sim P_\theta^n}{\mathbb{E}} \big{[} \rho(\theta, \sigma(X_1^n)) \big{]} \\
    &\geq \frac{1}{|\V|} \sum_{v \in \V} \pi(\theta_v) \underset{X_1^n \sim P_{\theta_v}^n}{\mathbb{E}} \big{[} \rho(\theta_v, \sigma(X_1^n)) \big{]} \\
    &\geq \frac{1}{|\V|} \sum_{v \in \V} \underset{X_1^n \sim P_{\theta_v}^n}{\mathbb{E}} \Big{[} 2\delta \sum_{j=1}^d \mathbbm{1} \big{\{} [\hat{v}(\sigma(X_1^n)]_j \neq v_j \big{\}} \Big{]} \ \ \ (\text{By $(2\delta,\pi)$-Hamming separation.}) \\
    &= \sum_{j=1}^d \frac{2\delta}{|\V|} \sum_{v \in \V} P_{\theta_v} \Big{(} [\hat{v}(\sigma(X_1^n)]_j \neq v_j \Big{)} \\
    &= \sum_{j=1}^d \frac{2\delta}{|\V|} \Bigg{[} \sum_{v | v_j=+1}  P_{\theta_v} \Big{(} [\hat{v}(\sigma(X_1^n)]_j \neq v_j \Big{)}  + \sum_{v | v_j=-1}  P_{\theta_v} \Big{(} [\hat{v}(\sigma(X_1^n)]_j \neq v_j \Big{)} \Bigg{]} \\
    &= \delta \sum_{j=1}^d \Bigg{[} \mathbb{P}_{+j} \Big{(} [\hat{v}(\sigma(X_1^n)]_j \neq v_j \Big{)} + \mathbb{P}_{-j} \Big{(} [\hat{v}(\sigma(X_1^n)]_j \neq v_j \Big{)} \Bigg{]}.
\end{align}
Taking an infimum over estimators (on the LHS) and test functions $\Psi: \X^n \rightarrow \{+1, -1\}$ (on the RHS) establishes the desired result. 

\end{proof}

\generalizedfano*
\begin{proof}
    The Donsker-Varadhan change of measure inequality \citep[Theorem 2.3.2]{Gray11} states that for any random variable $Z$, we have the following inequality for all distributions $P$ and $Q$:
    \begin{equation}
    \underset{Z \sim P}{\mathbb{E}} [Z] \leq \text{KL}(P \| Q) + \log \underset{Z \sim Q}{\mathbb{E}} \exp (Z).
    \end{equation}
    Choosing $Z = -\lambda L(\theta, \sigma(X_1^n))$, we then have:
    \begin{align}
    \underset{p(\theta, X_1^n)}{\mathbb{E}} \lambda L(\theta, \sigma(X_1^n)) \geq -\text{KL}(p(\theta, X_1^n) \| p(\theta) q(X_1^n)) - \log \underset{p(\theta) q(X_1^n)}{\mathbb{E}} \exp \Big{[} -\lambda L(\theta, \sigma(X_1^n)) \Big{]},
    \end{align}
    where $p(\theta, X_1^n)$ is the joint distribution defined by $p$ and $P_\theta$, and $q$ is any arbitrary distribution on $X_1^n$. 
    Taking an infimum over $\sigma$ on both sides, we have:
    \begin{align}
    \inf_\sigma \ \underset{p(\theta, X_1^n)}{\mathbb{E}} \lambda L(\theta, \sigma(X_1^n)) &\geq -\text{KL}(p(\theta, X_1^n) \| p(\theta) q(X_1^n)) - \sup_\sigma \  \log \underset{p(\theta) q(X_1^n)}{\mathbb{E}} \exp \Big{[} - \lambda L(\theta, \sigma(X_1^n)) \Big{]} \\
    &= -\text{KL}(p(\theta, X_1^n) \| p(\theta) q(X_1^n)) -  \log \Bigg{[} \sup_\sigma \ \underset{p(\theta) q(X_1^n)}{\mathbb{E}} \exp \Big{[} - \lambda L(\theta, \sigma(X_1^n)) \Big{]} \Bigg{]}.
    \end{align}
    The equality above follows from the monotonicity of the log function. Now, via the Fubini-Tonelli theorem, we have:
    \begin{align}
        \log \Bigg{[} \sup_\sigma \ \underset{p(\theta)}{\mathbb{E}} \underset{q(X_1^n)}{\mathbb{E}} \exp \Big{[} -\lambda L(\theta, \sigma(X_1^n)) \Big{]} \Bigg{]} &= \log \Bigg{[} \sup_\sigma \ \underset{q(X_1^n)}{\mathbb{E}} \underset{p(\theta)}{\mathbb{E}} \exp \Big{[} - \lambda L(\theta, \sigma(X_1^n)) \Big{]} \Bigg{]} \\
        &\leq \log \Bigg{[} \underset{q(X_1^n)}{\mathbb{E}} \underset{p(\theta)}{\mathbb{E}} \sup_\sigma \ \exp \Big{[} -\lambda L(\theta, \sigma(X_1^n)) \Big{]} \Bigg{]} \\
        &= \log \Bigg{[} \underset{q(X_1^n)}{\mathbb{E}} \underset{p(\theta)}{\mathbb{E}} \sup_a \ \exp \Big{[} -\lambda L(\theta, a) \Big{]} \Bigg{]} \\
        &= \sup_a \ \log \Bigg{[} \underset{p(\theta)}{\mathbb{E}} \ \exp \Big{[} - \lambda L(\theta, a) \Big{]} \Bigg{]} \\
        &\eqqcolon -\rho^\star_{\lambda, L}.
    \end{align}
    We thus have:
    \begin{align}
        \inf_\sigma \ \underset{p(\theta, X_1^n)}{\mathbb{E}} \lambda L(\theta, \sigma(X_1^n)) \geq -\text{KL}(p(\theta, X_1^n) \| p(\theta) q(X_1^n)) + \rho^\star_{\lambda, L}.
    \end{align}
    Noting that this inequality holds for any choice of $q$, we can supremize over $q$ to obtain the tightest bound:
    \begin{align}
        \sup_q \ -\text{KL}(p(\theta, X_1^n) \| p(\theta) q(X_1^n)) = -\inf_q \ \text{KL}(p(\theta, X_1^n) \| p(\theta) q(X_1^n)) = -I(\theta; X_1^n).
    \end{align}
    We thus obtain the desired result:
    \begin{align}
        \R_\text{Bayes}(p, L; \Theta) &= \inf_\sigma \ \underset{p(\theta, X_1^n)}{\mathbb{E}} L(\theta, \sigma(X_1^n)) \\
        &\geq \frac{1}{\lambda} \Big{[} \rho^\star_{\lambda, L} - I(\theta; X_1^n) \Big{]}. 
    \end{align}

\end{proof}


{\bf Lower bound for logistic regression (Sec.~\ref{sec:logistic regression}).} 
We prove the following lower bound for the logistic regression problem described in Sec.~\ref{sec:logistic regression}:
\begin{equation}
    \R_\text{prior}(\pi, L; \Theta) \geq \frac{1}{16} \cdot \frac{d^\frac{3}{2}}{\sqrt{\sum_{j=1}^d \sum_{i=1}^n \lambda_j^2 z_{ij}^2}}.
\end{equation}
\begin{proof}
    Using Assouad's method (Thm.~\ref{thm:assouad}) and \eqref{eq:assouad tv} we see:
    \begin{align}
        \R_\text{prior}(\pi, L; \Theta) &\geq \frac{\delta}{2} \sum_{j=1}^d \Big{[} 1 - \big{\|} P_{{+j}}^n - P_{{-j}}^n \big{\|}_\text{TV} \Big{]} \\
        &\geq \frac{d \delta}{2} \Bigg{[} 1 - \Bigg{(} \frac{1}{d} \sum_{j=1}^d \frac{1}{2^d} \sum_{v \in \V} \| P_{v,+j} - P_{v,-j} \|^2_\text{TV} \Bigg{)}^{\frac{1}{2}} \Bigg{]}, \label{eq:assouad weakening}
    \end{align}
    where $P_{v,\pm j}$ is defined as the distribution $P_{\theta_v}$ where coordinate $j$ takes the value $v_j = \pm 1$. The inequality above follows from the Cauchy-Schwarz inequality and convexity of the total variation distance (see \citep[p.165]{Duchi16}).

Define:
\begin{equation}
p_v(z) \coloneqq \frac{1}{1 + \exp(\frac{\delta}{\pi(\theta_v)} z^T v)},
\end{equation}
and let $\text{KL}(p \| q)$ denote the binary KL-divergence between $\text{Bernoulli}(p)$ and $\text{Bernoulli}(q)$. By Pinsker's inequality, we have for any $v, v'$:
\begin{align}
    \|P_{\theta_v}^n - P_{\theta_{v'}}^n\|^2_\text{TV} \leq \frac{1}{4} \Big{[} \text{KL}(P_{\theta_v}^n \| P_{{\theta_{v'}}}^n) + \text{KL}(P_{\theta_{v'}}^n \| P_{\theta_v}^n) \Big{]} = \frac{1}{4} \sum_{i=1}^n \Big{[} \text{KL}(p_v(z_i) \| p_{v'}(z_i)) +  \text{KL}(p_{v'}(z_i) \| p_{v}(z_i)) \Big{]}. 
\end{align}
Letting $p_a \coloneqq 1/(1+e^a)$ and $p_b \coloneqq 1/(1+e^b)$, we have (see \citep[p.167]{Duchi16}):
\begin{equation}
    \text{KL}(p_a \| p_b) + \text{KL}(p_b \| p_a) \leq (a-b)^2.
\end{equation}
This implies:
\begin{align}
    \|P_{\theta_v}^n - P_{\theta_{v'}}^n\|^2_\text{TV} &\leq \frac{1}{4} \sum_{i=1}^n \Bigg{[} \frac{\delta}{\pi(\theta_v)} z_i^T v - \frac{\delta}{\pi(\theta_{v'})} z_i^T v' \Bigg{]}^2 \\
    &= \frac{\delta^2}{4} \sum_{i=1}^n \Bigg{[} z_i^T \Bigg{(} \frac{v}{\pi(\theta_v)} - \frac{v'}{\pi(\theta_{v'})} \Bigg{)} \Bigg{]}^2.
\end{align}

In order to lower bound \eqref{eq:assouad weakening}, we use the preceding bound to note:
\begin{align}
     \frac{1}{2^d d} \sum_{j=1}^d \sum_{v \in \V} \| P_{v,+j} - P_{v,-j} \|^2_\text{TV} &\leq \frac{\delta^2}{4 d 2^d}  \sum_{v \in \V} \sum_{j=1}^d \sum_{i=1}^n \Bigg{[} z_{ij} \Bigg{(} \frac{1}{\pi(\theta_{v_j})} + \frac{1}{\pi(\theta_{v_{-j}})} \Bigg{)} \Bigg{]}^2 \\
     &= \frac{4\delta^2}{ d 2^d}  \sum_{v \in \V} \sum_{j=1}^d \sum_{i=1}^n z_{ij}^2 \lambda_j^2 \\
     &= \frac{4\delta^2}{d} \sum_{j=1}^d \sum_{i=1}^n  z_{ij}^2 \lambda_j^2.
\end{align}
Thus, using \eqref{eq:assouad weakening}, we have:
\begin{align}
    \R_\text{prior}(\pi, L; \Theta) &\geq \frac{d\delta}{2} \Bigg{[} 1 - \Bigg{(} \frac{4\delta^2}{d} \sum_{j=1}^d \sum_{i=1}^n  z_{ij}^2 \lambda_j^2 \Bigg{)}^\frac{1}{2} \Bigg{]}.
\end{align}
Setting 
\begin{equation}
    \delta^2 = \frac{d}{16 \sum_{j=1}^d \sum_{i=1}^n  z_{ij}^2 \lambda_j^2},
\end{equation}
we obtain the desired result:
\begin{align}
    \R_\text{prior}(\pi, L; \Theta) &\geq \frac{d \delta }{4} = \frac{1}{16} \cdot \frac{d^\frac{3}{2}}{\sqrt{\sum_{j=1}^d \sum_{i=1}^n  z_{ij}^2 \lambda_j^2}}.
\end{align}

\end{proof}

\section{Bernoulli mean estimation: upper bounds}
\label{app:upper bounds}

Fig.~\ref{fig:bernoulli upper bounds} presents \emph{upper bounds} on prioritized risk computed using Bayesian inference (with prior $\pi$) for the problem of Bernoulli mean estimation. Specifically, we present the learner-specific prioritized risk where the learner outputs the mean of the posterior distribution computed using the prior $\pi$ and a dataset of size $n$; since $\pi$ is chosen to be a Beta distribution (Beta($\alpha=1$, $\beta=2$)) and the underlying random variable has a Bernoulli distribution, one can analytically perform Bayesian inference in this setting. We estimate the expectation over datasets by averaging the loss over 10,000 datasets. We compare the results with two other learners corresponding to performing (i) performing Bayesian inference with a uniform prior, and (ii) Bayesian inference with a \emph{different prior} (specifically, we use Beta($\alpha=1$, $\beta=4$), which concentrates the prior towards values of $\theta$ where $\pi$ is higher). As the figure indicates, Bayesian inference with $\pi$ achieves a lower learner-specific prioritized risk than Bayesian inference with a uniform prior. However, the figure also shows that the custom inference algorithm (Bayesian inference with a more concentrated prior) achieves a lower learner-specific prioritized risk compared to Bayesian inference with $\pi$. Thus, Bayesian inference with prior $\pi$ is not necessarily optimal from the perspective of the prioritized risk. This observation thus leaves open the interesting direction for future work of identifying algorithms that achieve optimal prioritized risk.   

\begin{figure}[h]
 \begin{center}
 \includegraphics[width=0.6\columnwidth]{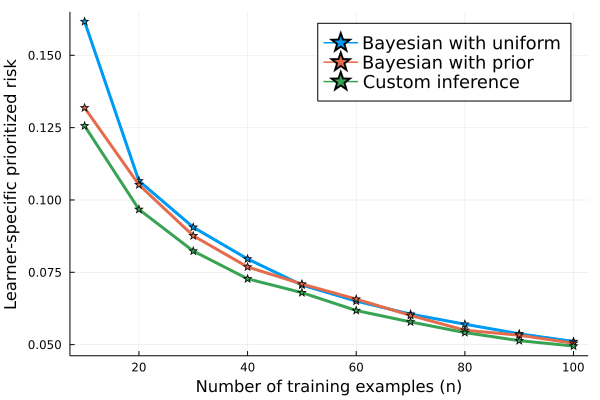}
 \end{center} 
 \vspace{-10pt}
 \caption{Upper bounds on prioritized risk using Bayesian inference with uniform prior, Bayesian inference with prior $\pi$, and a custom inference algorithm (Bayesian inference with a modified prior).  \label{fig:bernoulli upper bounds}}
 \vspace{-15pt}
 \end{figure}


\end{document}